\DeclareMathOperator*{\argmax}{arg\,max}
\newcommand{\bcmark}{\ding{51}} %
\newcommand{\bxmark}{\ding{55}} %
\newtheorem{corollary}{Corollary}
\newtheorem{lemma}{Lemma}
\theoremstyle{thmstyleone}%
\newtheorem{theorem}{Theorem}
\theoremstyle{thmstyletwo}%
\theoremstyle{thmstylethree}%
\begin{document}

\title[Article Title]{An Information Theory-inspired Strategy for Automat{ed} Network Pruning}


\author[1]{\fnm{Xiawu} \sur{Zheng}}

\author[1]{\fnm{Yuexiao} \sur{Ma}}

\author[2]{\fnm{Teng} \sur{Xi}}

\author[2]{\fnm{Gang} \sur{Zhang}}

\author[2]{\fnm{Errui} \sur{Ding}}

\author[1]{\fnm{Yuchao} \sur{Li}}

\author[3]{\fnm{Jie} \sur{Chen}}

\author[4]{\fnm{Yonghong} \sur{Tian}}

\author*[1]{\fnm{Rongrong} \sur{Ji}}

\affil[1]{\orgdiv{Media Analytics and Computing Lab}, \orgname{Department of Artificial Intelligence, School of Informatics}, \orgaddress{\city{Xiamen}, \country{China}}}

\affil[2]{\orgdiv{Department of Computer Vision Technology (VIS)}, \orgname{Baidu Inc},  \orgaddress{\city{Beijing}, \country{China}}}

\affil[3]{\orgdiv{Institute of Digital Media} \orgname{Peking University},  \orgaddress{\city{Beijing}, \country{China}}}

\affil[4]{\orgdiv{National Engineering Laboratory for Video Technology (NELVT)}, \orgname{School of Electronics Engineering and Computer Science}, \orgaddress{\city{Beijing}, \country{China}}}


\abstract{{Despite superior performance achieved on many computer vision tasks, deep neural networks demand high computing power and memory footprint.} 
Most existing network pruning methods require laborious human efforts and prohibitive computation resources, especially when the constraints are changed. 
This practically limits the application of model compression when the model needs to be deployed on a wide range of devices. 
Besides, existing methods are still challenged by the missing theoretical guidance, \textcolor{black}{which lacks influence on the generalization error}. 
In this paper we propose an information theory-inspired strategy for {automated} {network pruning}. 
The principle behind our method is the information bottleneck theory. 
Concretely, {we introduce a new theorem to illustrate that} the hidden representation should compress information with each other {to achieve a better generalization}. 
{In this way,} we further introduce the normalized Hilbert-Schmidt Independence Criterion (nHSIC) on network activations as a stable and generalized indicator {to construct} layer importance.
When a certain resource constraint is given, we integrate the HSIC indicator with the constraint to transform the architecture search problem into a linear programming problem with quadratic constraints. 
Such a problem is easily solved by a convex optimization method {within} a few seconds. 
We also provide rigorous proof to reveal that optimizing the normalized HSIC simultaneously minimizes the mutual information between different layers. 
Without any search process, our method achieves better compression trade-offs {compared} to the state-of-the-art compression algorithms. 
For instance, {on} ResNet-$50$, we achieve a $45.3\%$-FLOPs reduction, with a $75.75$ top-1 accuracy on ImageNet. 
Codes are available at \url{https://github.com/MAC-AutoML/ITPruner}.}

\keywords{Information Theory, Automated Network Pruning, Mutual Information}



\maketitle

\section{Introduction}\label{sec:intro}

While convolutional neural networks (CNNs) \cite{lecun2015deep} have demonstrated great success in various computer vision tasks, such as classification \cite{he2016deep, krizhevsky2012imagenet}, detection \cite{Ren2016fastrcnn, girshick2015fast} and semantic segmentation \cite{chendeeplab, long2015fully}, their large demands of computation power and memory footprint make most state-of-the-art CNNs notoriously challenging to deploy on resource-limited devices such as smartphones or wearable devices. 
Lots of research resources have been devoted to CNN compression and acceleration, including but not limited to parameter quantization \cite{jacob2017quantization, zhou2016dorefa}, filter compression \cite{wen2016learning, luo2017thinet, li2016pruning} and {automated} network pruning \cite{yang2018netadapt, wang2020revisiting, dong2019network, liu2019metapruning, yu2019autoslim, he2018amc}.

Among these compression methods, {automated} network pruning has recently attracted a lot of attentions and has been successfully applied to state-of-the-art architectures, such as EfficientNet \cite{tan2019efficientnet}, MnasNet \cite{tan2019mnasnet} and MobileNetV3 \cite{howard2019searching}. 
Given a pre-trained network, the compression method aims to automatically {reduce} the channel of the network until the resource budget is met while maximizing the accuracy. 
It is also named as channel number search \cite{dong2019network, yu2019autoslim, wang2020revisiting}, {automated} model compression \cite{he2018amc} or network adaption \cite{yang2018netadapt}. 
Based on the design of the methodology, we empirically categorize network pruning methods into two groups and discuss them below.

\textbf{Metric\textcolor{black}{-}based methods} compress the size of a network by selecting filters \textcolor{black}{that} rely on hand-crafted metrics. 
We can further divide existing metrics into two types, local metrics \cite{li2016pruning, luo2017thinet, wen2016learning} and global metrics \cite{lin2018accelerating, molchanov2019taylor}. 
The first pursues to identify the importance of filters inside a layer. 
In other words, local metric\textcolor{black}{-}based methods require human experts to design and decide hyper-parameters like channel number and then pruning filters in each layer according to $L1$ norm \cite{li2016pruning} or geometric median \cite{he2019filter}. These methods are not {automated} and thus less practical in compressing various models. 
In addition, extensive experiments in {previous works} \cite{blalock2020state, liu2018rethinking} suggest that filter-level pruning does not help as much as selecting a better channel number for {an} architecture.
The second designs a metric to compare the importance of filters across different layers. 
Such methods implicitly decide the channel number and thus alleviate \textcolor{black}{a} large amount of human effort compared to local metric\textcolor{black}{-}based methods. 
However, these methods usually perform network pruning followed by a data-driven and/or iterative optimization to recover accuracy, both of which are time-cost.

\textbf{Search\textcolor{black}{-}based methods.}
Apart from these human-designed heuristics, efforts on search\textcolor{black}{-}based methods have been made recently. 
The main difference between these methods lies in the search algorithm and architecture evaluation method. 
In {terms} of \textcolor{black}{the} search algorithm, AMC \cite{he2018amc} and NetAdapt \cite{yang2018netadapt} leverage reinforcement learning to efficiently sample the search space. 
Autoslim \cite{yu2019autoslim} greedily slim the layer with minimal accuracy drop. 
In terms of the evaluation method, existing methods usually adopt {a} weight sharing strategy to evaluate the accuracy \cite{wang2020revisiting, dong2019network, liu2019metapruning, yu2019autoslim, he2018amc}. 
To explain, weight sharing methods maintain an over-parameterized network that covers the entire search space. 
The sampled architectures directly inherit the corresponding weights from the over-parameterized network, which is then used to evaluate the performance and update the weights. 
{Apart from these methods, \emph{Yuan et al.} \cite{yuan2020growing} propose an efficient network growing method by using structured continuous sparsification, which produces a compact neural network and drastically reduces the computational expense of training.}
Both of these methods require recompression or retraining while the constraints {change}. 
This practically limits the application of network pruning since the model needs to be deployed on a wide range of devices.

In this paper, we propose an information theory-inspired pruning (ITPruner) strategy that does not need the aforementioned iterative training and search process, which is simple and straightforward. 
Specifically, we first introduce the normalized Hilbert-Schmidt Independence Criterion (nHSIC) on network activations as an accurate and robust layer-wise importance indicator. 
Such importance is then combined with the constraint to convert the architecture search problem to a linear programming problem with bounded variables. 
In this way, we obtain the optimal architecture by solving the linear programming problem, which only takes a few seconds on a single CPU and GPU. 

Our method is motivated by the information bottleneck (IB) theory \cite{tishby99information}. 
That is, for each network layer, we {\textcolor{black}{theoretically} prove that one} should minimize the information between the layer activation and input {for a better generalization}. 
We generalize such a principle to different layers for {automated} network pruning, \emph{i.e., a layer correlated to other layers is less important.} 
In other words, we build a connection between network redundancy and information theory. 
However, calculating the mutual information among the intractable distribution of the layer activation is impractical. 
We thus adopt a non-parametric kernel-based method the Hilbert-Schmidt Independence Criterion (HSIC) to characterize the statistical independence. 
In summary, our contributions are as follows:
\begin{itemize}
	\item \textbf{Information theory on deep learning redundancy.} To our best knowledge, this is the first to build a relationship between {mutual information} and \textcolor{black}{the} redundancy of CNN by generalizing the information bottleneck principle. {Specifically, we provide rigorous proof to show that minimizing layer-wise mutual information will lead to a better generalization.} We are also the first that \textcolor{black}{applies} such {a} relationship in {automated} network pruning. \footnote{Previous work \cite{dai2018compressing} also proposed to compress the network using \textcolor{black}{the} variational information bottleneck. However, their metric is only applied inside a specific layer, which is not {automated} and less effective.}
	\item \textbf{Methodology.} We propose a unified framework to automatically compress a network without any search process. The framework integrates HSIC importance with constraints to transform the network pruning to the convex optimization problem, which is efficient for deployment on various devices.
	\item \textbf{Theoretical contributions.} Except for the theoretical guarantees in {information bottleneck}, we also theoretically prove the robustness of HSIC as well as the relationship between our method and mutual information. In particular, as proved in Sec.~\ref{sec:theo}, optimizing HSIC is equivalent to {minimizing} the layer-wise mutual information under the gaussian modeling.
\end{itemize}

The experimental results demonstrate the efficiency and effectiveness on different network architectures and datasets. 
Notably, without any search process, our compressed MobileNetV1 obtain $50.3\%$ FLOPs reduction with $70.92$ Top-$1$ accuracy. 
Meanwhile, the compressed MobileNet V$2$, ResNet-$50$ and MobileNetV$1$ obtained by ITPruner achieve the best performance gains over the state-of-the-art AutoML methods.  

\begin{figure*}[t]
	\centering
	\includegraphics[width=1.0\linewidth]{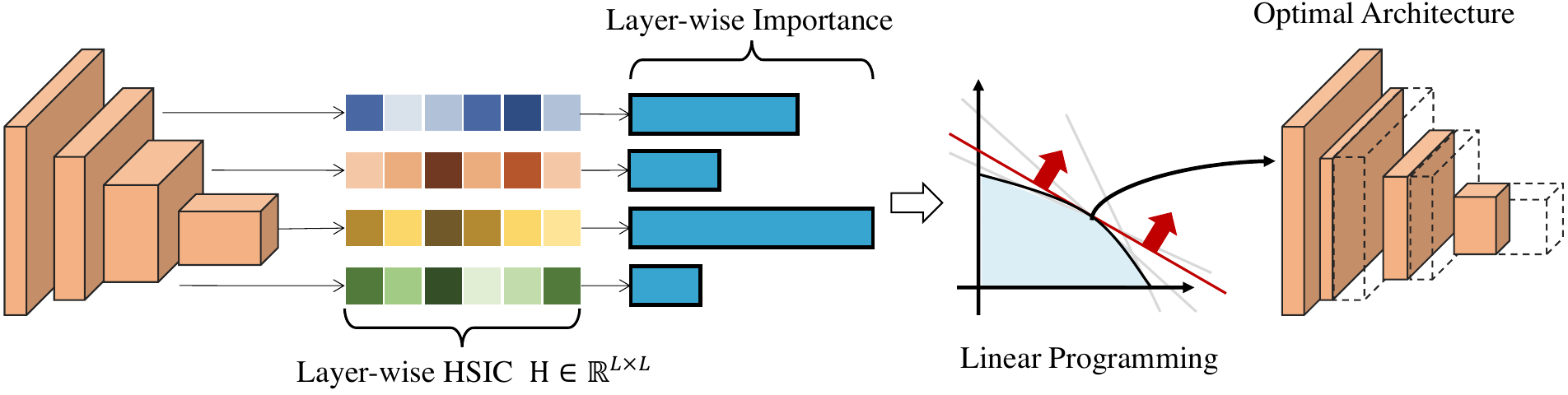}
	\caption{Overview of our ITPruner. Specifically, We first sample $n$ images to obtain the feature map of each layer. Then the normalized HSIC is employed to calculate the independence map $\boldsymbol{H}$ between different layers. For each layer, we sum the elements of the corresponding column in $\boldsymbol{H}$ except for itself as the importance indicator of the layer. In this way, we model the layer-wise importance and compression constraints as \textcolor{black}{the} linear programming problem. Finally, the optimal network architecture is obtained by solving the optimal solution on the linear programming.}
	\label{fig:framework}
\end{figure*}

\section{Related Work}\label{sec2}

There are a large amount {of} works that aim at compressing CNNs and employing information theory {in} deep learning. {Most network compression methods are introduced in Sec.~\ref{sec:intro}, where the rest} approaches are summarized as follows.

\textbf{{Automated} Network Pruning.} Following previous works \cite{yang2018netadapt, wang2020revisiting, dong2019network, liu2019metapruning, yu2019autoslim, he2018amc}, {automated} network pruning adapts channels of a pre-trained model automatically to a specific platform (\emph{e.g.,} mobile, IoT devices) under a given resource budget. 
Similar works that {modify} channels to reduce its FLOPs and speedup its inference have been proposed \cite{wen2016learning, luo2017thinet, li2016pruning, wang2020revisiting, dong2019network, liu2019metapruning, yu2019autoslim, he2018amc} in the literature, which is known as network pruning. 
Most of the pruning-based methods have been discussed in Sec.~\ref{sec:intro}. 
Therefore, we select the two most related works \cite{molchanov2019taylor, lin2018accelerating}, which proposed to prune networks by designing a global metric. 
However, these methods are still far from satisfactory. 
On one hand, there is a significant performance and architecture gap compared to search\textcolor{black}{-}based methods \cite{liu2019metapruning, yu2019autoslim, he2018amc}. 
On the other hand, these methods also need an extra iterative optimization step to recover the performance. 
In this case, search\textcolor{black}{-}based methods and global metric-based methods require almost the same computation resources. 
In contrast, our method shows superior performance compared to search based methods. 
Besides, we do not need any iterative training or search in our method, and generating an optimal architecture only takes a few seconds.

\textbf{ Information theory}\cite{cover1999elements} is a fundamental research area which often measures \textcolor{black}{the} information of random variables associated with distributions. 
Therefore, a lot of principles and theories have been employed to explore the learning dynamic \cite{goldfeld2018estimating, alemi2016deep} or as a training objective to optimize the network without stochastic gradient descent \cite{ma2020hsic, shwartz2017opening}. 
To our best knowledge, there exist only two prior works \cite{dai2018compressing, ahn2019variational} that apply information theory in network compression. 
Dai \textit{et al.} proposed to compress the network using variation information bottleneck \cite{tishby99information}. However, their metric measures importance inside a specific layer, which is not {automated} and less effective. 
Ahn \textit{et al.} proposed a variational information distillation method, aiming to distill the knowledge from the teacher to the student. 
Such a method is orthogonal to our method. In other words, we can employ the method as a post-processing step to further improve the performance. 

\section{Methodology}\label{sec3}

\noindent \textbf{Notations.} Upper case (\emph{e.g.,} $X, Y$) denotes random variables. Bold denotes vectors (\emph{e.g.,} $\boldsymbol{x, y}$), matrices (\emph{e.g.,} $\boldsymbol{X, Y}$) or tensors. Calligraphic font denotes spaces (\emph{e.g.,} $\mathcal{X}, \mathcal{Y}$). We further define some common notations in CNNs. The $l$-th convolutional layer in a specific CNN convert\textcolor{black}{s} an input tensor $\boldsymbol{X}^l \in \mathbb{R}^{c^l \times h_{\text{in}}^l \times w_{\text{in}}^l}$ to an output tensor $\boldsymbol{Y}^l \in \mathbb{R}^{n^l \times h_{\text{out}}^l \times w_{\text{out}}^l}$ by using a weight tensor $\boldsymbol{W}^l \in \mathbb{R}^{n^l \times c^l \times k^l \times k^l}$, where $c^l$ and $n^l$ denote the numbers of input channels and filters (output channels), respectively, and $k^l\times k^l$ is the spatial size of the filters.

\noindent \textbf{Problem formulation.} Given a pre-trained CNN model that contains $L$ convolutional layers, we refer to $\boldsymbol{\alpha} = (\alpha_1, \alpha_2, ... , \alpha_L)$ as the desired architecture, where $\alpha_l$ is the compression ratio of $l$-th layers. Formally, we address the following optimization problem:
\begin{equation}\label{eq:prob}
	\max_{\boldsymbol{W} \in \mathcal{W}, \boldsymbol{\alpha} \in \mathcal{A}} f(\boldsymbol{W} , \boldsymbol{\alpha}), \, s.t. \; T(\boldsymbol{\alpha}) < {\Omega},
\end{equation}
where $f:\mathcal{W} \times \mathcal{A} \rightarrow \mathbb{R}$ is the objective function that is differentiable \emph{w.r.t}. the network weight $\boldsymbol{W} \in \mathcal{W}$, and is not differentiable \emph{w.r.t.} the architecture indicator vector $\boldsymbol{\alpha} \in \mathcal{A} \in\mathbb{R}^{L \times 1}$. 
$T(.)$ is a function that denotes the constraints for architectures, such as FLOPs and latency\footnote{For example, most mobile devices have the constraint that the FLOPs should be less than $600$M.}, and $\Omega$ is given for different hardwares.

We present an overview of ITPruner in Fig.~\ref{fig:framework}, which aims to automatically discover the redundancy of each layer in a pre-trained network. 
Notably, the redundancy is also characterized by the layer-wise pruning ratio or sparsity. 
The detailed motivations, descriptions and analysis are presented in the following sub-sections.

\subsection{Information Bottleneck in Deep Learning}\label{sec:info_bo}

The information bottleneck (IB) \cite{tishby99information} principle express\textcolor{black}{es} a tradeoff of mutual information from the hidden representation to input and output, which is formally defined as 
\begin{equation}\label{eq:basic_IB}
	\min_{p(X_i|X)} I(X;X_i)-\beta I(X_i;Y),
\end{equation}
where $X,Y$ are the random variables of input and label, and $X_i$ represents the hidden representation. 
Intuitively, the IB principle is proposed to compress the mutual information between the input and the hidden representation while preserving the information between the hidden representation about the input data. 
Here we introduce a new theorem that provide\textcolor{black}{s} an evidence \textcolor{black}{for} the IB \textcolor{black}{principle} in deep learning. 
\begin{theorem} \label{theorem:compression_bottle}
    Assuming $X$ is the input random variable follows a Markov random field structure and the Markov random field is ergodic. 
    For a network that have $L$ hidden representations $X_1, X_2, ..., X_L$, with a probability $1-\delta$, the generalization error $\epsilon$ is bounded by
    \begin{equation}\nonumber
        \epsilon \leq \sum_{i=1}^L\sqrt{\frac{\log \frac{2}{\delta} + \log2\text{I}(X ;X_i)}{2n}},
    \end{equation}
    where $n$ is the number of training examples. 
\end{theorem}
\noindent {Theorem \ref{theorem:compression_bottle} indicates that the bound of the generalization error increases as the layer-wise mutual information grows.} 
\textcolor{black}{In other words, it provides a mathematical methodology to directly influence the generalizability of deep neural networks, which is essential in network pruning. As network pruning always wants to compress a network work with high performance in the validation set.} 
Considering that {hidden representations in CNNs are the input of the subsequent layers,} we further {reformulate Theorem \ref{theorem:compression_bottle}} as
\begin{equation}\label{eq:general_IB_previous}
	\min_{p(X_i|X)} \sum_{i=1}^L \sum_{j=i+1}^L \left(I(X;X_i) +  I(X_j;X_i)  \right) -\beta I(X_i;Y).
\end{equation}
Note that there \textcolor{black}{is} always a fine-tuning process in the pruning process. We thus further simplify the IB principle as
\begin{equation}\label{eq:general_IB}
	\min_{p(X_i|X)} \sum_{i=1}^L \sum_{j=i+1}^L \left(I(X;X_i) +  I(X_j;X_i)  \right).
\end{equation}
According to the generalized IB principle, we hope that the mutual information between different layers is close to $0$. 
In other words, representations from different layers are best to be independent of each other. 
We thus generalize such a principle in network compression, \emph{i.e., a layer correlated to other layers is less important.} 
Meanwhile, layer-wise mutual information can be considered as a robust and accurate indicator to achieve search-free {automated} network pruning. 
\textcolor{black}{Another possible way to reduce generalization error is MKR \cite{yang2021multiple}, which reduces data bias by using symbolic knowledge. In other words, we can also employ ITPruner with MKR when the data bias is accessible.}

The IB is hard to compute in practice. On one hand, the distribution of the hidden representation is intractable in CNN. Previous work \cite{ahn2019variational} defines a variational distribution that approximates the distribution. 
However, such a method also introduces a new noise between the approximated and true distribution, which is inevitable as the Kullback-Leiber divergence is non-negative. 
On the other hand, the hidden representations usually have a high dimension in CNN. 
In this case, many algorithms based on binning {suffer} from the curse of dimensionality. 
Besides, they also yield different results with different bin sizes.

\subsection{Normalized HSIC Indicator}

To solve the issues in Sec.~\ref{sec:info_bo}, we introduce the normalized HSIC to replace the mutual information term in Eq.~\ref{eq:general_IB}. 
Formally, the Hilbert-Schmidt Independence Criterion (HSIC) \cite{gretton2005measuring} is defined as
\begin{equation}
	\begin{aligned}
	\text{HSIC}( & P_{XY}, \mathcal{H}, \mathcal{G}) = ||C_{XY}||^2	\\
	& = \mathbb{E}_{X,X^{'},Y,Y^{'}}[k_X(X, X^{'})k_Y(Y, Y^{'})] \\
	& + \mathbb{E}_{X,X^{'}}[k_X(X, X^{'})]\mathbb{E}_{Y,Y^{'}}[k_Y(Y, Y^{'})] \\
	& - 2\mathbb{E}_{XY}[\mathbb{E}_{X^{'}}[k_X(X, X^{'})] \mathbb{E}_{Y^{'}}[k_Y(Y, Y^{'})]].
	\end{aligned}
\end{equation}
Here $k_X, k_Y$ are kernel functions, $\mathcal{H}, \mathcal{G}$ are the Reproducing Kernel Hilbert Space (RKHS) and $\mathbb{E}_{X,X^{'},Y,Y^{'}}$ denotes the expectation over independent examples $(x,y), (x^{'}, y^{'})$ drawn from $p_{xy}$. 
In order to make HSIC practically available, Gretton \emph{et al.} \cite{gretton2005measuring} further proposed to empirically approximate $\text{HSIC}(P_{XY}, \mathcal{H}, \mathcal{G})$ given a finite number of observations. 
Specifically, let $\mathcal{D}:=\{(x_1, y_1), ...,(x_n, y_n)\}$ contain $n$ i.i.d samples drawn from  the distribution $P_{XY}$. 
The estimation of HSIC is given by
\begin{equation}\label{eq:esti_HSIC}
	\text{HSIC}(\mathcal{D}, \mathcal{F}, \mathcal{G}) = (n-1)^{-2}\text{tr}(\boldsymbol{K}_X\boldsymbol{H}\boldsymbol{K}_Y\boldsymbol{H}),
\end{equation}
where $\boldsymbol{K}_X, \boldsymbol{K}_Y, \boldsymbol{H} \in \mathbb{R}^{n\times n}$, $\boldsymbol{K}_X $ and $ \boldsymbol{K}_Y$ have entries $\boldsymbol{K}_{X(i,j)} = k(x_i, x_j)$, and $\boldsymbol{H} = \boldsymbol{I}_n - \frac{1}{n}\boldsymbol{1}_n \boldsymbol{1}^T_n $ is the centering matrix. 
Notably, an alternative implementation of $\boldsymbol{H}$ is to centralize the examples, \emph{i.e.,} $x_i = x_i - \mu_X$, where $\mu_X = \frac{1}{n} \sum_i X_i$. 
In this way, Eq.~\ref{eq:esti_HSIC} is further simplified as
\begin{equation}\label{eq:simp_HSIC}
	\text{HSIC}(X,Y) = (n-1)^{-2}\text{tr}(\boldsymbol{K}_X\boldsymbol{K}_Y),
\end{equation}
where $\boldsymbol{K}_X,\boldsymbol{K}_Y$ are centralized matrices. In our paper, we use the normalized HSIC (nHSIC) based on the centered kernel alignment \cite{kornblith2019similarity}, given by
\begin{equation}\label{eq:norm_HSIC}
	\text{nHSIC}(X, Y) = \frac{\text{tr}(\boldsymbol{K}_X \boldsymbol{K}_Y )}{\sqrt{\text{tr}(\boldsymbol{K}_X \boldsymbol{K}_X)}\sqrt{\text{tr}(\boldsymbol{K}_Y \boldsymbol{K}_Y)}}
\end{equation}

The HSIC can be effectively computed in $\mathcal{O}(n^2)$ time. Meanwhile, the uniform convergence bounds derived in \cite{gretton2005measuring} with respect to $P_{XY}$ is $\mathcal{O}(n^{-\frac{1}{2}})$. 
When an appropriate kernel is selected, $\text{HSIC} = 0$ if and only if the random variables $X, Y$ are independent, \emph{i.e.,} $P_{XY} = P_X P_Y $. 
We also provide proof in Sec.~\ref{sec:theo} to demonstrate the relationship between HSIC and mutual information: When the linear kernel is selected, minimizing HSIC is equivalent to {minimizing} the mutual information under a {gaussian} assumption. 
Meanwhile, in this case, nHSIC is also equivalent to the RV coefficient \cite{robert1976unifying} and Tucker's congruence coefficient \cite{lorenzo2006tucker}.

\subsection{Pruning Strategy}

\textbf{{Obtain $\boldsymbol{\alpha}$}}. We now describe how to use the nHSIC to {obtain $\boldsymbol{\alpha}$} under an optimal trade-off between compression rate and accuracy. 
Specifically, we first sample $n$ images to obtain the feature map of each layer $\boldsymbol{X}^1, \boldsymbol{X}^2, ..., \boldsymbol{X}^L$ for the network that needs to be compressed. 
We then use these feature maps and Eq.~\ref{eq:norm_HSIC} to get the independence between different layers, thereby constructing an independence matrix $\boldsymbol{H} \in \mathbb{R}^{L\times L}$, where $\boldsymbol{H}$ has entries $\boldsymbol{H}_{i,j} = \text{nHSIC}(\boldsymbol{X^i}, \boldsymbol{X^j})$. 
As mentioned before, a layer correlated to other layers is less important. 
Therefore, the importance of a specific layer $l$ is formally defined as
\begin{equation}\label{eq:import}
	\boldsymbol{i}^l = e^{-\beta \sum_{i=1, i\neq l}^L \text{nHSIC}(\boldsymbol{X}^l, \boldsymbol{X}^i ) },
\end{equation}
where $\beta$ is proposed to control \textcolor{black}{the} relative compression rate between different layers. 
To explain, the compressed network \textcolor{black}{tends} to \textcolor{black}{have} a significant compression rate difference between different layers with a small $\beta$, and vice versa. 
We also provide extensive experiments in Sec.~\ref{sec:expri} to verify the conclusion. 
With the layer-wise importance $\boldsymbol{i} \in \mathbb{R}^{L\times 1}$, the problem of Eq.~\ref{eq:prob} is transformed to a linear programming problem that is differentiable \emph{w.r.t} $\alpha$. 
Formally, Eq.~\ref{eq:prob} is rewritten as
\begin{equation}\label{eq:HIS_prob}
	\max_{\boldsymbol{\alpha}} \, \boldsymbol{i}^T\boldsymbol{\alpha} \, s.t. \; T(\boldsymbol{\alpha}) < {\Omega}.
\end{equation}
In most cases, the constraints such as FLOPs, parameters, and inference time can be decomposed to the sum over different layers. 
Therefore, Eq.~\ref{eq:HIS_prob} is further simplified to a linear programming \textcolor{black}{problem} with quadratic constraints
\begin{equation}\label{eq:final_prob}
	\max_{\boldsymbol{\alpha}} \, \boldsymbol{i}^T\boldsymbol{\alpha} \, s.t. \; \boldsymbol{\alpha}^T\boldsymbol{T} \boldsymbol{\alpha} < {\Omega},
\end{equation}
where $\boldsymbol{T} \in \mathbb{R}^{L\times L}$ is the constraint factor matrix corresponding to the layers. 
{Notably, $\boldsymbol{T}$ is compatible with different constraints, including but not limited to model size, FLOPs, and Inference time\footnote{{FBnet~\cite{wu2019fbnet} proposed a method using a latency table to predict the latency on specific hardware. In ITPruner, we can measure the inference layer by layer to obtain an estimator using $\boldsymbol{\alpha}$.} }.}
For example, suppose we choose the {FLOPs}~\cite{MolchanovICLRPruning} as the constraint. 
{According to the defination, the constraint in Eq.~\ref{eq:final_prob} is further specified to
\begin{equation}\nonumber
\begin{aligned}
    \boldsymbol{\alpha}^T\boldsymbol{T} \boldsymbol{\alpha} = & 2h_{\text{out}}^1w_{\text{out}}^1\left(3k^1k^1+1\right)\alpha^1n^1 + \\ 
    &\sum_{l=2}^L 2h_{\text{out}}^lw_{\text{out}}^l\left(\alpha^{l-1}c^lk^lk^l+1\right)\alpha^ln^l.
\end{aligned}
\end{equation}}
{In other words,} the constraints are quadratic in Eq.~\ref{eq:final_prob}.
{Meanwhile}, solving the problem in Eq.~\ref{eq:final_prob} is extremely effective, which only takes a few seconds on a single CPU by using the solver in \cite{kraft1988software}. 
Our proposed ITPruner is also summarized in Alg.~\ref{alg:ITPruner}. 

{\textbf{Network Pruning.}
After $\boldsymbol{\alpha}$ is obtained, we then pruning and finetune the network to compress model size and retain the accuracy, respectively. As mentioned in \cite{blalock2020state, liu2018rethinking}, selecting which weights in a layer need to be pruned does not help as much as finding a better $\boldsymbol{\alpha}$. Meanwhile, our method is orthogonal with different local metric based pruning methods, which can be incorporated to achieve better performance. For simplicity, $L1$ norm \cite{li2016pruning} is selected as the pruning criterion. Specifically, to pruning $l-$th layer, we first convert the $4-$D weight tensor $\boldsymbol{W}^l \in \mathbb{R}^{n^l \times c^l \times k^l \times k^l}$ to the $2-$D matrix $\boldsymbol{W}^{l'} \in \mathbb{R}^{n^l \times \left[ c^l \times k^l \times k^l\right]}$. In this case, the weight matrix $\boldsymbol{W}^{l'}$ is considered as a collection of $n^l$ 3$-D$ filters. For each filters, we then calculate the sum of its absolute as $\boldsymbol{s}^l \in \mathbb{R}^{n^l \times 1}, \boldsymbol{s}^l_i = \sum_{j=1}^{\left[ c^l \times k^l \times k^l\right]} |\boldsymbol{W}^{l'}_{i,j}|$. After that, $\lfloor \boldsymbol{\alpha}^ln^l \rfloor$ filters with the smallest values and their corresponding feature maps are pruned in the layer. The feature maps and the filters in the next convolutional layer are removed accordingly. Finally, we train the pruned network to recover the accuracy. It is worth noting that ITPruner is orthogonal to traditional pruning methods, including structured pruning and unstructured pruning. Therefore, we also integrate unstructured pruning in our method, where $\boldsymbol{W}^l$ is directly converted to a vector and $\lfloor \boldsymbol{\alpha}^ln^l \rfloor$ elements with the smallest $L1$ value are pruned for compression.
}


\renewcommand{\algorithmicrequire}{\textbf{Input:}}
\renewcommand{\algorithmicensure}{\textbf{Output:}}

\begin{algorithm}
\caption{ITPruner\label{alg:ITPruner}}
\begin{algorithmic}[1]
\Require Sampled $n$ i.i.d samples $\mathcal{D}$, pre-trained model $\mathcal{M}$.
\Ensure Optimized architecture $\boldsymbol{\alpha}^*$.
\State Input $\mathcal{D}$ into $\mathcal{M}$ to generate layer-wise feature map $\boldsymbol{X}^1, ... , \boldsymbol{X}^L$;\\
Obtain the independent matrix $\boldsymbol{H}$ by using Eq.~\ref{eq:norm_HSIC};\\
Calculate the layer-wise importance $\boldsymbol{i}$ by Eq.~\ref{eq:import};\\
$\alpha^* = \argmax_{\boldsymbol{\alpha}} \, \boldsymbol{i}^T\boldsymbol{\alpha} \, s.t. \; \boldsymbol{\alpha}^T\boldsymbol{T} \boldsymbol{\alpha} < {\Omega}$ ;
\end{algorithmic}
\end{algorithm}

\textbf{Discussion.} Compared to previous methods \cite{wang2020revisiting, dong2019network, liu2019metapruning, yu2019autoslim, he2018amc}, ITPruner is effective and easy to use. 
In terms of effectiveness, our method does not need \textit{any training and search} process to obtain the optimal architecture. 
The time complexity is only correlated to the sample number $n$, \emph{i.e.,} $\mathcal{O}(n)$ in the feature generation and $\mathcal{O}(n^2)$ in HSIC calculation, both of them only take a few seconds on a single GPU and CPU, respectively. 
In terms of usability, there are only two hyper-parameters $n$ and $\beta$ in our method. 
Meanwhile, $n$ and $\beta$ are generalized to different architectures and datasets. 
Extensive experiments in Sec.~\ref{subsec:abla} demonstrate that the performance gap between different $\beta$ and $n$ is negligible.
This is in stark contrast to the previous search-based methods, as they need to exhaustedly select a large \textcolor{black}{number} of hyper-parameters for different architectures.

\section{Theoretical Analysis}\label{sec:theo}

In this section, we provide theoretical analysis about the proposed method. Specifically, the invariance of scale and orthogonal transformation is already proposed in \cite{kornblith2019similarity}, which is also the key property of feature similarity metric \cite{kornblith2019similarity} in deep learning. Here we provide the strict proofs about the {aforementioned} invariance in Theorem \ref{theo:scale_invar} and Theorem \ref{theo:orth_invar} respectively. To further explain the relationship between $\text{nHSIC}_{\text{linear}}$ and mutual information, we theoretically prove that optimizing $\text{nHSIC}_{\text{linear}}$ is equivalent to minimizing the mutual information in Theorem \ref{theo:hsic_MI}.


To make the implementation easy, we use linear kernel $k(\boldsymbol{x}, \boldsymbol{y}) = \boldsymbol{x}^T \boldsymbol{y}$ in all our experiments. 
In this case, $ \boldsymbol{K}_X=XX^T, \boldsymbol{K}_Y=YY^T $. 
And nHSIC is further simplified as 
\begin{equation}\label{eq:linear_HSIC}
\small
\begin{aligned}
	&\text{nHSIC}_{\text{linear}}(X, Y) \\
	&= \frac{\text{tr}\left(\boldsymbol{XX}^T \boldsymbol{YY}^T \right)}{\sqrt{\text{tr}\left(\boldsymbol{XX}^T \boldsymbol{XX}^T\right)}\sqrt{\text{tr}\left(\boldsymbol{YY}^T \boldsymbol{YY}^T\right)}}\\
	&= \frac{\left\langle \text{vec}\left(\boldsymbol{XX}^T\right),\text{vec}\left(\boldsymbol{YY}^T\right) \right\rangle}{\sqrt{\left\langle \text{vec}\left(\boldsymbol{XX}^T\right),\text{vec}\left(\boldsymbol{XX}^T\right) \right\rangle}\sqrt{\left\langle \text{vec}\left(\boldsymbol{YY}^T\right),\text{vec}\left(\boldsymbol{YY}^T\right) \right\rangle}}\\
	&=\frac{||\boldsymbol{Y}^T \boldsymbol{X} ||_F^2}{||\boldsymbol{X}^T \boldsymbol{X} ||_F ||\boldsymbol{Y}^T \boldsymbol{Y} ||_F}, 
\end{aligned}
\end{equation}
where $||.||_F$ is Frobenius norm or the Hilbert-Schmidt norm. 
Eq.~\ref{eq:linear_HSIC} is easy to compute and has some useful properties of being invariant to scale and orthogonal transformations. 
To explain, we provide two formal theorems and the corresponding proofs as follows
\begin{theorem}\label{theo:scale_invar}
$\text{nHSIC}_{\text{linear}}$ is invariant to scale transformation, i.e., $\text{nHSIC}_{\text{linear}}(\beta X, Y) = \text{nHSIC}_{\text{linear}}(X, Y)$.
\end{theorem}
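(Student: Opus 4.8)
The plan is to work directly from the closed-form expression in Eq.~\ref{eq:linear_HSIC}, namely
\[
\text{nHSIC}_{\text{linear}}(X, Y) = \frac{\|\boldsymbol{Y}^T \boldsymbol{X}\|_F^2}{\|\boldsymbol{X}^T \boldsymbol{X}\|_F \, \|\boldsymbol{Y}^T \boldsymbol{Y}\|_F},
\]
and simply substitute $\beta X$ in place of $X$, tracking how the scalar $\beta$ propagates through each of the three factors. The entire argument is a degree-counting exercise in $\beta$, so I would first isolate the one algebraic fact it rests on and then read off the cancellation.

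That fact is the absolute homogeneity of the Frobenius norm, $\|\beta \boldsymbol{A}\|_F = |\beta|\,\|\boldsymbol{A}\|_F$, hence $\|\beta\boldsymbol{A}\|_F^2 = \beta^2\|\boldsymbol{A}\|_F^2$. Equivalently, one may argue from the trace form $\mathrm{tr}(\boldsymbol{XX}^T\boldsymbol{YY}^T)$, where scaling is transparent because $(\beta\boldsymbol{X})(\beta\boldsymbol{X})^T = \beta^2\,\boldsymbol{XX}^T$ and the trace is linear; I would state the norm version since it matches the displayed formula most directly.

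Next I would compute the three pieces under the substitution $X \mapsto \beta X$. The numerator becomes $\|\boldsymbol{Y}^T(\beta\boldsymbol{X})\|_F^2 = \beta^2\|\boldsymbol{Y}^T\boldsymbol{X}\|_F^2$; the first denominator factor becomes $\|(\beta\boldsymbol{X})^T(\beta\boldsymbol{X})\|_F = \|\beta^2\boldsymbol{X}^T\boldsymbol{X}\|_F = \beta^2\|\boldsymbol{X}^T\boldsymbol{X}\|_F$; and the second denominator factor $\|\boldsymbol{Y}^T\boldsymbol{Y}\|_F$ contains no $X$ and is unchanged. Forming the quotient, the $\beta^2$ in the numerator cancels exactly against the $\beta^2$ produced in the denominator, returning the original expression, which is precisely $\text{nHSIC}_{\text{linear}}(X, Y)$.

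There is essentially no obstacle here; the result is immediate once homogeneity is invoked. The only point demanding a moment of care is that the $\beta^2$ arising from the \emph{squared} numerator is matched degree-for-degree by the $\beta^2$ from the self-product $\boldsymbol{X}^T\boldsymbol{X}$ in the denominator, so that the scaling cancels irrespective of the sign of $\beta$ (since $|\beta|^2 = \beta^2$). I would note in passing that this exact balancing is what the centered-kernel-alignment normalization was designed to ensure, which also foreshadows that the orthogonal invariance of Theorem~\ref{theo:orth_invar} will follow from an analogous bookkeeping applied to $\boldsymbol{X}\mapsto\boldsymbol{X}\boldsymbol{U}$.
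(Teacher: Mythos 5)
Your proposal is correct and follows essentially the same route as the paper: substitute $\beta X$ into the closed-form linear-kernel expression of Eq.~\ref{eq:linear_HSIC} and cancel the resulting powers of $\beta$. If anything, your bookkeeping is slightly more careful than the paper's displayed computation, which writes the denominator as $\beta\cdot\|\boldsymbol{X}^T\boldsymbol{X}\|_F\,\beta\cdot\|\boldsymbol{Y}^T\boldsymbol{Y}\|_F$ (attaching one $\beta$ to each factor, which would even have the wrong sign for $\beta<0$), whereas your version correctly places the full $\beta^2$ on $\|(\beta\boldsymbol{X})^T(\beta\boldsymbol{X})\|_F$ and leaves $\|\boldsymbol{Y}^T\boldsymbol{Y}\|_F$ untouched.
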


\begin{proof}
	\begin{equation}\nonumber
	\begin{aligned}
		\text{nHSIC}_{\text{linear}}(\beta X, Y) &= 
		\frac{\beta^2||\boldsymbol{Y}^T \boldsymbol{X} ||_F^2}{\beta \cdot||\boldsymbol{X}^T \boldsymbol{X} ||_F \beta \cdot ||\boldsymbol{Y}^T \boldsymbol{Y} ||_F}\\
		&=\text{nHSIC}_{\text{linear}}(X, Y).
	\end{aligned}
	\end{equation}
\end{proof}
\begin{theorem}\label{theo:orth_invar}
	$\text{nHSIC}_{\text{linear}}$ is invariant to orthogonal transformation $\text{nHSIC}_{\text{linear}}(X \boldsymbol{U}, Y) = \text{nHSIC}_{\text{linear}}(X, Y)$, where $\boldsymbol{U} \boldsymbol{U}^T = \boldsymbol{I}$.
\end{theorem}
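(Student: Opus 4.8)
The plan is to proceed exactly as in the proof of Theorem~\ref{theo:scale_invar}, namely by substituting the transformed feature $X\boldsymbol{U}$ directly into the closed form of Eq.~\ref{eq:linear_HSIC} and tracking how each of its three Frobenius norms changes. Writing $\text{nHSIC}_{\text{linear}}(X\boldsymbol{U},Y)=\|\boldsymbol{Y}^T(X\boldsymbol{U})\|_F^2 \,/\, \bigl(\|(X\boldsymbol{U})^T(X\boldsymbol{U})\|_F\,\|\boldsymbol{Y}^T\boldsymbol{Y}\|_F\bigr)$, I first observe that the last factor in the denominator does not involve $X$ and is therefore untouched, so the whole argument reduces to the two norm computations that do contain $X$.

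Next I would handle the numerator. Expanding $\|\boldsymbol{Y}^T X\boldsymbol{U}\|_F^2 = \text{tr}(\boldsymbol{Y}^T X\boldsymbol{U}\boldsymbol{U}^T X^T\boldsymbol{Y})$ and inserting the orthogonality relation $\boldsymbol{U}\boldsymbol{U}^T=\boldsymbol{I}$ collapses the middle factor, leaving $\text{tr}(\boldsymbol{Y}^T X X^T\boldsymbol{Y})=\|\boldsymbol{Y}^T X\|_F^2$, which is exactly the original numerator.

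The main obstacle, though still routine, is the first denominator factor, where $\boldsymbol{U}$ appears on both sides as a conjugation: $\|(X\boldsymbol{U})^T(X\boldsymbol{U})\|_F=\|\boldsymbol{U}^T(X^T X)\boldsymbol{U}\|_F$. Here a single use of $\boldsymbol{U}\boldsymbol{U}^T=\boldsymbol{I}$ is not enough. I would instead set $\boldsymbol{A}=X^TX$ and expand $\|\boldsymbol{U}^T\boldsymbol{A}\boldsymbol{U}\|_F^2=\text{tr}(\boldsymbol{U}^T\boldsymbol{A}^T\boldsymbol{U}\boldsymbol{U}^T\boldsymbol{A}\boldsymbol{U})$, collapse the inner $\boldsymbol{U}\boldsymbol{U}^T$ to obtain $\text{tr}(\boldsymbol{U}^T\boldsymbol{A}^T\boldsymbol{A}\boldsymbol{U})$, and then apply the cyclic property of the trace to move the outer $\boldsymbol{U}$ next to $\boldsymbol{U}^T$, cancelling it and yielding $\text{tr}(\boldsymbol{A}^T\boldsymbol{A})=\|X^TX\|_F^2$. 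In words, the Frobenius norm is invariant under orthogonal conjugation, and this is the only step where cyclic invariance of the trace is genuinely required.

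Finally, substituting the two unchanged norms back into the ratio gives $\text{nHSIC}_{\text{linear}}(X\boldsymbol{U},Y)=\text{nHSIC}_{\text{linear}}(X,Y)$, completing the proof. I would remark that the entire argument uses only $\boldsymbol{U}\boldsymbol{U}^T=\boldsymbol{I}$ together with trace cyclicity and does not separately invoke $\boldsymbol{U}^T\boldsymbol{U}=\boldsymbol{I}$; moreover, by the symmetry of Eq.~\ref{eq:linear_HSIC} in its two arguments, the corresponding invariance under a right orthogonal transformation applied to $Y$ follows immediately.
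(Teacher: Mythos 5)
Your proposal is correct and follows essentially the same route as the paper: both reduce each Frobenius norm to a trace, use $\boldsymbol{U}\boldsymbol{U}^T=\boldsymbol{I}$ to collapse the adjacent factors, and rely on cyclic invariance of the trace to handle the conjugated denominator term $\|\boldsymbol{U}^T\boldsymbol{X}^T\boldsymbol{X}\boldsymbol{U}\|_F$ (the paper does this by passing to the Gram matrices $\boldsymbol{X}\boldsymbol{X}^T$, which is the same manipulation in a slightly different order). Your closing observations about only needing $\boldsymbol{U}\boldsymbol{U}^T=\boldsymbol{I}$ and about the symmetric invariance in $Y$ are accurate additions but do not change the argument.
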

\begin{proof}
\begin{equation}\nonumber
	\begin{aligned}
		&\text{nHSIC}_{\text{linear}}( X\boldsymbol{U}, Y) \\
		&= 
		\frac{||\boldsymbol{Y}^T \boldsymbol{X} \boldsymbol{U} ||_F^2}{||\boldsymbol{U}^T \boldsymbol{X}^T \boldsymbol{X} \boldsymbol{U} ||_F ||\boldsymbol{Y}^T \boldsymbol{Y} ||_F}\\
		&=\frac{tr(\boldsymbol{X} \boldsymbol{U} \boldsymbol{U}^T \boldsymbol{X}^T \boldsymbol{Y} \boldsymbol{Y}^T)}{\sqrt{tr(\boldsymbol{X} \boldsymbol{U} \boldsymbol{U}^T \boldsymbol{X}^T \boldsymbol{X} \boldsymbol{U} \boldsymbol{U}^T \boldsymbol{X}^T)} ||\boldsymbol{Y}^T \boldsymbol{Y} ||_F}\\
		&=\frac{tr(\boldsymbol{X} \boldsymbol{X}^T  \boldsymbol{Y} \boldsymbol{Y}^T)}{\sqrt{tr(\boldsymbol{X} \boldsymbol{X}^T \boldsymbol{X} \boldsymbol{X}^T)} ||\boldsymbol{Y}^T \boldsymbol{Y} ||_F}\\
		&=\frac{||\boldsymbol{Y}^T \boldsymbol{X} ||_F^2}{||\boldsymbol{X}^T \boldsymbol{X} ||_F ||\boldsymbol{Y}^T \boldsymbol{Y} ||_F}\\
		&=\text{nHSIC}_{\text{linear}}(X, Y),
	\end{aligned}
\end{equation}
which means that $\text{nHSIC}_{\text{linear}}$ is invariant to orthogonal transformation. 
\end{proof}
Theorem \ref{theo:scale_invar} and \ref{theo:orth_invar} illustrate the robustness of $\text{nHSIC}_{\text{linear}}$ in different network architectures. 
Specifically, when the network architecture contain the batch normalization \cite{ioffe2015batch} or $1\times 1$ orthogonal convolution layer \cite{wang2020orthogonal}, we can obtain exactly the same layer-wise importance $\boldsymbol{i}$, and thus obtain a same network architecture. 
Considering that ITPruner has a superior performance in different architectures and datasets, we conclude that $\text{nHSIC}_{\text{linear}}$ is a stable and accurate indicator. 
And ITPruner thus is a robust compression method.

Except for the theorem about the relationship between independence and HSIC in \cite{gretton2005measuring}, we propose a new theorem to illustrate how the mutual information is also minimized by the linear HSIC under a Gaussian assumption.
\begin{theorem}\label{theo:hsic_MI}
	Assuming that $X \sim \mathcal{N} (\boldsymbol{0},  \boldsymbol{\Sigma}_{X}), Y \sim \mathcal{N} (\boldsymbol{0}, \boldsymbol{\Sigma}_{Y})$, $\min \text{nHSIC}_{\text{linear}}(X,Y) \iff \min I(X;Y)$.
\end{theorem}
\begin{proof}
	According to the definition of entropy and mutual information, we have
	\begin{equation}\nonumber
		I(X;Y) = H(X) + H(Y) - H(X, Y).
	\end{equation}
	Meanwhile, according to the definition of the multivariate Gaussian distribution, we have
	\begin{equation}\nonumber
		H(X) = \frac{1}{2}\ln\left( \left(2\pi e \right)^D |\boldsymbol{\Sigma}_X| \right) = \frac{D}{2} (\ln 2 \pi + 1) + \frac{1}{2}\ln|\boldsymbol{\Sigma}_X|,
	\end{equation}
	and $(X,Y)\sim \mathcal{N}(\boldsymbol{0}, \boldsymbol{\Sigma}_{(X,Y)})$, where 
	\begin{equation}\nonumber
	\boldsymbol{\Sigma}_{(X,Y)}=
		\begin{pmatrix}
			\boldsymbol{\Sigma}_{X} & \boldsymbol{\Sigma}_{XY}\\
			\boldsymbol{\Sigma}_{YX} & \boldsymbol{\Sigma}_{Y}\\
		\end{pmatrix}.
	\end{equation}
	Therefore, when $X, Y$ follows a Gaussian distribution, the mutual information is represented as 
	\begin{equation}\label{eq:mutual_info}
		I(X;Y) = \ln|\boldsymbol{\Sigma}_X| + \ln|\boldsymbol{\Sigma}_Y| - \ln|\boldsymbol{\Sigma}_{(X,Y)}|.
	\end{equation}
	Everitt \emph{et al.} \cite{everitt_1958} proposed \textcolor{black}{an} inequation that $|\boldsymbol{\Sigma}_{(X,Y)}|\leq |\boldsymbol{\Sigma}_{X}||\boldsymbol{\Sigma}_{Y}|$, and the equality holds if and only if $ \boldsymbol{\Sigma}_{YX} = \boldsymbol{\Sigma}_{XY}^T = \boldsymbol{X}^T\boldsymbol{Y}$ is a zero matrix. 
	Applying the inequation to Eq.~\ref{eq:mutual_info} we have $I(X;Y)\geq 0$, and the equality holds if and only if $\boldsymbol{X}^T\boldsymbol{Y}$ is a zero matrix. According to the \textcolor{black}{definition} of Frobenius norm, we have $||\boldsymbol{Y}^T \boldsymbol{X} ||_F^2 = ||\boldsymbol{X}^T\boldsymbol{Y} ||_F^2$.
	Apparently, when minimizing Eq.~\ref{eq:linear_HSIC}, we are also minimizing the distance between $\boldsymbol{X}^T\boldsymbol{Y}$ and zero matrix. 
	In other words, while minimizing $\text{nHSIC}_{\text{linear}}$, we are also minimizing the mutual information between two Gaussian distributions, namely, $\min \text{nHSIC}_{\text{linear}}(X,Y) \iff \min I(X;Y).$
	\end{proof}

 Except for the theorem about the relationship between independence and HSIC in \cite{gretton2005measuring}, we propose a new theorem to illustrate how the mutual information is also minimized by the linear HSIC under a Gaussian assumption.
\begin{theorem}\label{theo:hsic_MI}
	Assuming that $X \sim \mathcal{N} (\boldsymbol{0},  \boldsymbol{\Sigma}_{X}), Y \sim \mathcal{N} (\boldsymbol{0}, \boldsymbol{\Sigma}_{Y})$, $\min \text{nHSIC}_{\text{linear}}(X,Y) \iff \min I(X;Y)$.
\end{theorem}
\begin{proof}
	According to the definition of entropy and mutual information, we have
	\begin{equation}\nonumber
		I(X;Y) = H(X) + H(Y) - H(X, Y).
	\end{equation}
	Meanwhile, according to the definition of the multivariate Gaussian distribution, we have
	\begin{equation}\nonumber
		H(X) = \frac{1}{2}\ln\left( \left(2\pi e \right)^D |\boldsymbol{\Sigma}_X| \right) = \frac{D}{2} (\ln 2 \pi + 1) + \frac{1}{2}\ln|\boldsymbol{\Sigma}_X|,
	\end{equation}
	and $(X,Y)\sim \mathcal{N}(\boldsymbol{0}, \boldsymbol{\Sigma}_{(X,Y)})$, where 
	\begin{equation}\nonumber
	\boldsymbol{\Sigma}_{(X,Y)}=
		\begin{pmatrix}
			\boldsymbol{\Sigma}_{X} & \boldsymbol{\Sigma}_{XY}\\
			\boldsymbol{\Sigma}_{YX} & \boldsymbol{\Sigma}_{Y}\\
		\end{pmatrix}.
	\end{equation}
	Therefore, when $X, Y$ follows a Gaussian distribution, the mutual information is represented as 
	\begin{equation}\label{eq:mutual_info}
		I(X;Y) = \ln|\boldsymbol{\Sigma}_X| + \ln|\boldsymbol{\Sigma}_Y| - \ln|\boldsymbol{\Sigma}_{(X,Y)}|.
	\end{equation}
	Everitt \emph{et al.} \cite{everitt_1958} proposed \textcolor{black}{an} inequation that $|\boldsymbol{\Sigma}_{(X,Y)}|\leq |\boldsymbol{\Sigma}_{X}||\boldsymbol{\Sigma}_{Y}|$, and the equality holds if and only if $ \boldsymbol{\Sigma}_{YX} = \boldsymbol{\Sigma}_{XY}^T = \boldsymbol{X}^T\boldsymbol{Y}$ is a zero matrix. 
	Applying the inequation to Eq.~\ref{eq:mutual_info} we have $I(X;Y)\geq 0$, and the equality holds if and only if $\boldsymbol{X}^T\boldsymbol{Y}$ is a zero matrix. According to the \textcolor{black}{definition} of Frobenius norm, we have $||\boldsymbol{Y}^T \boldsymbol{X} ||_F^2 = ||\boldsymbol{X}^T\boldsymbol{Y} ||_F^2$.
	Apparently, when minimizing Eq.~\ref{eq:linear_HSIC}, we are also minimizing the distance between $\boldsymbol{X}^T\boldsymbol{Y}$ and zero matrix. 
	In other words, while minimizing $\text{nHSIC}_{\text{linear}}$, we are also minimizing the mutual information between two Gaussian distributions, namely, $\min \text{nHSIC}_{\text{linear}}(X,Y) \iff \min I(X;Y).$
	\end{proof}

 \begin{figure}[tb]
\centering
\includegraphics[width=1.0\linewidth]{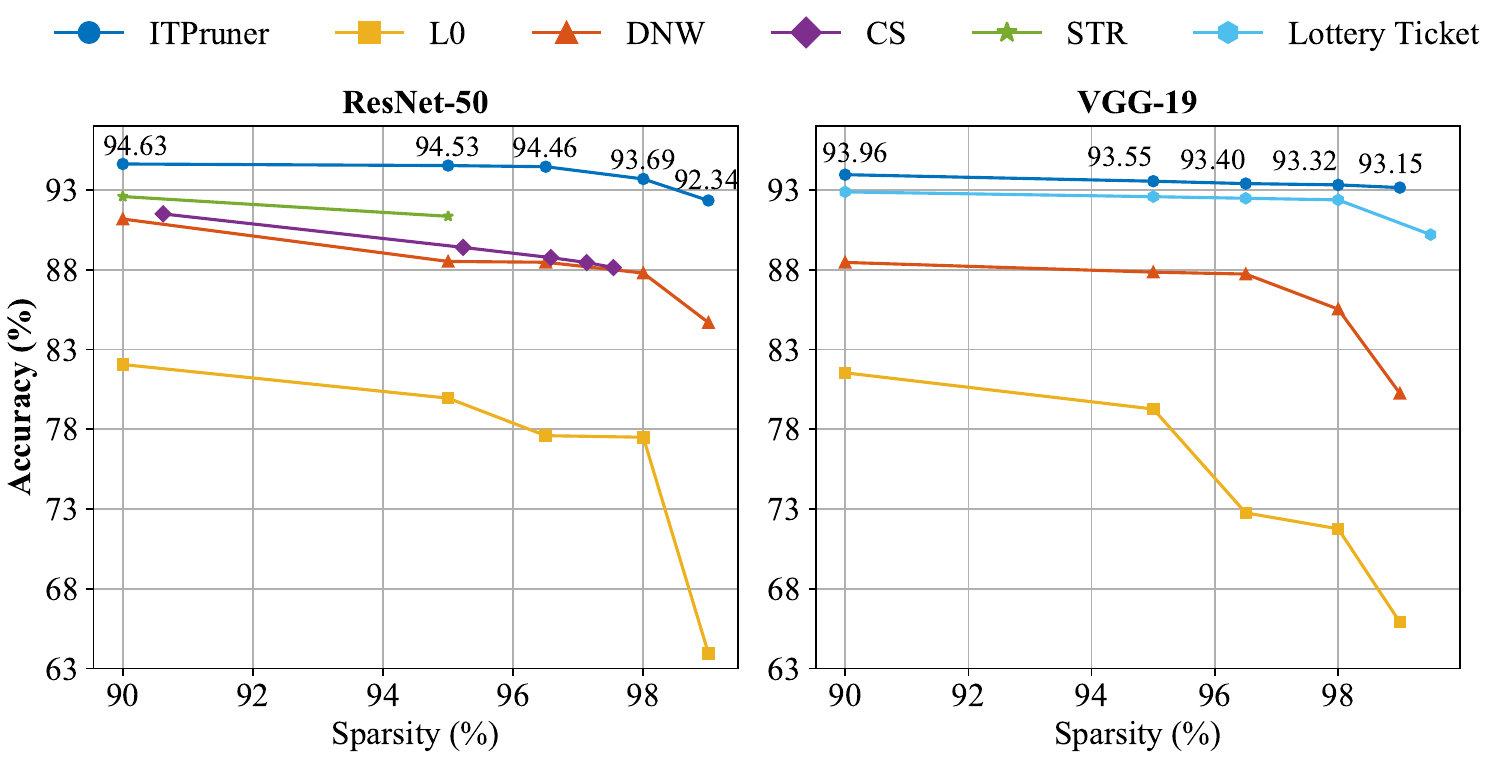}
\caption{{Sparsity ratio vs accuracy tradeoffs for different unstructured pruning methods and ITPruner on ResNet-50 (left) and VGG-19 (right). ITPruner clearly \textcolor{black}{outperforms} the other baselines with a clear margin, especially under large compression ratios.}}
\label{fig:Unstructured_pruning_cifar}
\end{figure}

\begin{table*}[htb]
\begin{center}
\caption{\label{tab:cifar_result} Top $1$ accuracy, compression ratio and search cost of different backbones on CIFAR-10. Specifically, `$\uparrow\downarrow$' denotes the increase and the decrease of accuracy comparing to baseline models, and `Ratio$\downarrow$' indicates the reduction of FLOPs. Notability, metric based methods usually integrate the search with the training process. It is thus hard to recognize search epochs in these methods and `-' stands for unavailable records. The proposed method is emphasized in bold format.}
\resizebox{\textwidth}{!}{
\begin{tabular}{cccccccc}
\toprule
\multirow{2}{*}{\textbf{Model}} & \multirow{2}{*}{\textbf{Method}} & \multirow{2}{*}{\textbf{Type}} & \textbf{Top-1 Acc} & \multirow{2}{*}{$\uparrow\downarrow$} & \textbf{FLOPs} & \multirow{2}{*}{\textbf{Ratio$\downarrow$}} & \textbf{Search}\\
& & & ($\%$) & &\textbf{(M)} & &\textbf{Epoch} \\
\midrule
\multirow{12}{*}{VGG \cite{simonyan2014very}} & Baseline & $-$ & $94.47$ & $-$ & $313.5$ & $-$ & $-$\\
 & L1 \cite{li2016pruning} & Local Metric & $\textcolor{black}{93.40}$ & $1.07\downarrow$ & $\textcolor{black}{206.0}$ & $34.3\%$ & $-$\\
 & FPGM \cite{he2019filter} & Local Metric & $93.54$ & $0.93\downarrow$ & $201.1$ & $35.9\%$ & $-$\\
 & GAL \cite{lin2019towards} & Local Metric & $92.03$ & $2.44\downarrow$ & $189.5$ & $39.6\%$ & $-$\\
 & HRank \cite{lin2020hrank} & Local Metric & $92.34$ & $2.13\downarrow$ & $108.6$ & $65.3\%$ & $-$\\
 & \textcolor{black}{DECODER-200 \cite{DECORE}} & \textcolor{black}{Automated} & $\textcolor{black}{93.56}$ & $\textcolor{black}{0.91\downarrow}$ & $\textcolor{black}{110.3}$ & $\textcolor{black}{64.8\%}$ & $\textcolor{black}{260}$\\
 & \textcolor{black}{FTWT\underline{\hspace{0.5em}}J \cite{FTWT}} & \textcolor{black}{Local Metric} & $\textcolor{black}{93.55}$ & $\textcolor{black}{0.92\downarrow}$ & $\textcolor{black}{109.7}$ & $\textcolor{black}{65.0\%}$ & $\textcolor{black}{-}$\\
 & \textbf{ITPruner} & \textbf{Automated} & $\textbf{94.00}$ & $\textbf{0.47}\downarrow$ & $\textbf{98.8}$ & $\textbf{68.5\%}$ & $\textbf{0}$\\
 \cdashline{2-8}[5.0pt/2pt]
   \specialrule{0em}{1pt}{1pt}
 & Growing \cite{yuan2020growing} & Automated & $\textcolor{black}{92.50}$ & $1.97\downarrow$ & $42.3$ & $86.5\%$ & $60$\\
 & \textcolor{black}{DECODER-100 \cite{DECORE}} & \textcolor{black}{Automated} & $\textcolor{black}{92.44}$ & $\textcolor{black}{2.03\downarrow}$ & $\textcolor{black}{58.0}$ & $\textcolor{black}{81.5\%}$ & $\textcolor{black}{260}$\\
 & \textcolor{black}{FTWT\underline{\hspace{0.5em}}J \cite{FTWT}} & \textcolor{black}{Local Metric} & $\textcolor{black}{92.65}$ & $\textcolor{black}{1.82\downarrow}$ & $\textcolor{black}{81.5}$ & $\textcolor{black}{74.0\%}$ & $\textcolor{black}{-}$\\
  & \textbf{ITPruner} & \textbf{Automated} & $\textbf{93.49}$ & $\textbf{0.98}\downarrow$ & $\textbf{\textcolor{black}{42.4}}$ & $\textbf{86.5}\%$ & $\textbf{0}$\\
 \midrule

\multirow{7}{*}{ResNet-20 \cite{he2016deep}} & Baseline & $-$ & $92.57$ & $-$ & $40.6$ & $-$ \\
 & FPGM \cite{he2019filter} & Local Metric & $91.09$ & $1.48\downarrow$ & $24.3$ & $40.1\%$& $-$ \\
 & APS \cite{wang2020revisiting} & Automated & $91.86$ & $0.71\downarrow$ & $20.9$ & $48.5\%$& 600 \\
 & TAS \cite{dong2019network} & Automated & $91.99$ & $0.58\downarrow$ & $19.9$ & $51.0\%$& 600 \\
  & Taylor \cite{molchanov2019importance} & Global Metric & $91.51$ & $1.06\downarrow$ & $\textcolor{black}{19.3}$ & $52.6\%$& $-$ \\
   & Growing \cite{yuan2020growing} & Automated & $90.91$ & $1.66\downarrow$ & $\textcolor{black}{20.4}$ & $49.8\%$ & $60$\\
 & \textbf{ITPruner} & \textbf{Automated} & $\textbf{92.01}$ & $\textbf{0.56}\downarrow$ & $\textbf{20.8}$ & $\textbf{48.8\%}$ & $\textbf{0}$\\ 
 \midrule
 \multirow{13}{*}{ResNet-56 \cite{he2016deep}} & Baseline & $-$ & $93.93$ & $-$ & $\textcolor{black}{125.0}$ & $-$ \\
   & GAL \cite{lin2019towards} & Local Metric & $92.98$ & $0.95\downarrow$ & $78.3$ & $37.4\%$ & $-$\\
   & APS \cite{wang2020revisiting} & Automated & $93.42$ & $0.51\downarrow$ & $60.3$ & $51.8\%$& 600 \\
 & TAS \cite{dong2019network} & Automated & $92.87$ & $1.06\downarrow$ & $63.1$ & $49.5\%$& 600 \\ 
  & AMC \cite{he2018amc} & Automated & $91.90$ & $2.03\downarrow$ & $62.9$ & $49.7\%$& 400 \\ 
   & \textbf{ITPruner} & \textbf{Automated} & $\textbf{93.43}$ & $\textbf{0.50}\downarrow$ & $\textbf{59.5}$ & $\textbf{52.4\%}$ & $\textbf{0}$\\
   \cdashline{2-8}[5.0pt/2pt]
   \specialrule{0em}{1pt}{1pt}
   & \textcolor{black}{FTWT\underline{\hspace{0.5em}}D \cite{FTWT}} & \textcolor{black}{Local Metric} & $\textcolor{black}{92.63}$ & $\textcolor{black}{1.30\downarrow}$ & $\textcolor{black}{42.5}$ & $\textcolor{black}{66.0\%}$ & $\textcolor{black}{-}$\\
   & \textcolor{black}{FTWT\underline{\hspace{0.5em}}J \cite{FTWT}} & \textcolor{black}{Local Metric} & $\textcolor{black}{92.28}$ & $\textcolor{black}{1.65\downarrow}$ & $\textcolor{black}{57.5}$ & $\textcolor{black}{54.0\%}$ & $\textcolor{black}{-}$\\
    & \textcolor{black}{Random Pruning \cite{Li_random}} & \textcolor{black}{Local Metric} & $\textcolor{black}{93.48}$ & $\textcolor{black}{0.45\downarrow}$ & $\textcolor{black}{63.8}$ & $\textcolor{black}{49.0\%}$ & $\textcolor{black}{-}$\\
   & \textcolor{black}{MFP \cite{He_fiter}} & \textcolor{black}{Global Metric} & $\textcolor{black}{93.56}$ & $\textcolor{black}{0.37\downarrow}$ & $\textcolor{black}{59.3}$ & $\textcolor{black}{52.6\%}$ & $\textcolor{black}{-}$\\
   & \textcolor{black}{DECODER-55 \cite{DECORE}} & \textcolor{black}{Automated} & $\textcolor{black}{90.85}$ & $\textcolor{black}{3.08\downarrow}$ & $\textcolor{black}{23.1}$ & $\textcolor{black}{81.5\%}$ & $\textcolor{black}{260}$\\
   & \textcolor{black}{DECODER-200 \cite{DECORE}} & \textcolor{black}{Automated} & $\textcolor{black}{93.26}$ & $\textcolor{black}{0.67\downarrow}$ & $\textcolor{black}{62.6}$ & $\textcolor{black}{49.9\%}$ & $\textcolor{black}{260}$\\
   & \textbf{ITPruner + FPGM} & \textbf{Automated} & $\textbf{94.05}$ & $\textbf{0.12}\uparrow$ & $\textbf{59.5}$ & $\textbf{52.4\%}$ & $\textbf{0}$\\
\midrule
\multirow{8}{*}{DenseNet-40 \cite{huang2017densely}} &  Baseline &
$-$ &
$94.81$ &
$-$ &
$\textcolor{black}{282.0}$ &
$-$ &
$-$\\

 & GAL-0.01\cite{lin2019towards} & Local Metric & $94.29$ & $0.52\downarrow$ & $\textcolor{black}{182.9}$ & $35.1\%$ & $-$\\
 & HRank \cite{lin2020hrank} & Local Metric & $94.24$ & $0.57\downarrow$ & $\textcolor{black}{167.4}$ & $40.6\%$ & $-$\\
 & \textbf{ITPruner} & \textbf{Automated} & $\textbf{94.56}$ & $\textbf{0.36}\downarrow$ & $\textbf{\textcolor{black}{167.5}}$ & $\textbf{40.6\%}$ & $\textbf{0}$\\
 \cdashline{2-8}[5.0pt/2pt]
   \specialrule{0em}{1pt}{1pt}
   & Zhao \textit{et al} \cite{zhao2019variational} & Local Metric & $93.16$ & $1.65\downarrow$ & $\textcolor{black}{156.0}$ & $44.7\%$ & $-$\\
  & GAL-$0.05$ \cite{lin2019towards} & Local Metric & $93.53$ & $1.28\downarrow$ & $\textcolor{black}{128.1}$ & $54.6\%$ & $-$\\
 &  HRank \cite{lin2020hrank} &  Local Metric &  $93.68$ &  $1.13\downarrow$ &  $\textcolor{black}{110.2}$ &  $60.9\%$ &  $-$\\
  &  \textbf{ITPruner} &  \textbf{Automated} &  $\textbf{94.22}$ &  $\textbf{0.59}\downarrow$ &  $\textbf{\textcolor{black}{110.6}}$ &  $\textbf{60.8\%}$ &  $\textbf{0}$\\
 \bottomrule
 
\end{tabular}}

\end{center}
\end{table*}

\begin{table*}[htb]
\begin{center}
\caption{\label{tab:imagenet_result} Top-$1$ accuracy, compression ratio and search cost of different backbones and methods on ImageNet. Similarly, `$\uparrow\downarrow$' denotes the increase and the decrease of accuracy compared to baseline models, and `Ratio$\downarrow$' indicates the reduction of FLOPs and `-' stands for unavailable records. $^{*}$ denotes the results are reported with knowledge distillation/expanded search space/extra training tricks. The proposed method is emphasized in bold format for better visualization.}
\resizebox{\textwidth}{!}{
\begin{tabular}{cccccccc}
\toprule
\multirow{2}{*}{\textbf{Model}} & \multirow{2}{*}{\textbf{Method}} & \multirow{2}{*}{\textbf{Type}} & \textbf{Top-1 Acc} & \multirow{2}{*}{$\uparrow\downarrow$} & \textbf{FLOPs} & \multirow{2}{*}{\textbf{Ratio$\downarrow$}} & \textbf{Search}\\
& & & ($\%$) & &\textbf{(M)} & &\textbf{Epoch} \\
 \midrule
\multirow{10}{*}{MobileNetV1 \cite{howard2017mobilenets}} & Baseline & $-$ & $\textcolor{black}{72.70}$ & $-$ & $569$ & $-$ & $-$\\
 & Uniform ($0.75\times$) & $-$ & $\textcolor{black}{68.40}$ & $\textcolor{black}{4.30}\downarrow$ & $325$ & $42.9\%$ & $-$\\
 & NetAdapt \cite{yang2018netadapt} & Automated & $\textcolor{black}{69.10}$ & $\textcolor{black}{3.60}\downarrow$ & $284$ & $50.1\%$ & $-$\\
  & AMC \cite{he2018amc} & Automated & $70.50$ & $2.20\downarrow$ & $285$ & $49.9\%$& $400$ \\
  & \textcolor{black}{FTWT \cite{FTWT}} & \textcolor{black}{Local Metric} & $\textcolor{black}{69.66}$ & $\textcolor{black}{3.04\downarrow}$ & $\textcolor{black}{335}$ & $\textcolor{black}{41.1\%}$ & $\textcolor{black}{-}$\\
  & \textbf{ITPruner} & \textbf{Automated} & $\textbf{70.92}$ & $\textbf{1.78}\downarrow$ & $\textbf{283}$ & $\textbf{50.3\%}$ & $\textbf{0}$\\ 
  \cdashline{2-8}[5.0pt/2pt]
   \specialrule{0em}{1pt}{1pt}
   & Uniform ($0.5\times$) & $-$ & $\textcolor{black}{63.70}$ & $\textcolor{black}{9.00}\downarrow$ & $149$ & $73.8\%$ & $-$\\
 & MetaPruning$^*$ \cite{liu2019metapruning} & Automated & $\textcolor{black}{66.10}$ & $\textcolor{black}{6.60}\downarrow$ & $149$ & $73.8\%$ & $60$\\
 & AutoSlim$^*$ \cite{yu2019autoslim} & Automated & $67.90$ & $4.80\downarrow$ & $150$ & $73.6\%$ & $100$\\
 & \textbf{ITPruner} & \textbf{Automated} & $\textbf{68.06}$ & $\textbf{4.64}\downarrow$ & $\textbf{149}$ & $\textbf{73.8\%}$ & $\textbf{0}$\\ 
 \midrule
\multirow{9}{*}{MobileNetV2 \cite{sandler2018mobilenetv2}} & Baseline & $-$ & $\textcolor{black}{72.10}$ & $-$ & $301$ & $-$ \\
 & Uniform ($0.75\times$)& $-$ & $\textcolor{black}{69.80}$ & $\textcolor{black}{2.30}\downarrow$ & $220$ & $26.9\%$& $-$ \\
 & AMC \cite{he2018amc} & Automated & $\textcolor{black}{70.80}$ & $\textcolor{black}{1.30}\downarrow$ & $220$ & $26.6\%$& $400$ \\
 & MetaPruning$^*$ \cite{liu2019metapruning} & Automated & $71.20$ & $\textcolor{black}{0.90}\downarrow$ & $217$ & $27.9\%$ & $60$\\
 & AutoSlim$^*$ \cite{yu2019autoslim} & Automated & $71.14$ & $0.96\downarrow$ & $207$ & $31.2\%$ & $100$\\
 &\color{black}  Random Pruning \cite{Li_random} &\color{black}  Local Metric &\color{black}  $\textcolor{black}{70.90}$ &\color{black}  $\textcolor{black}{1.20}\downarrow$ &\color{black}  $213$ &\color{black}  $29.1\%$ &\color{black}  120\\
  & \textbf{ITPruner} & \textbf{Automated} & $\textbf{71.54}$ & $\textbf{0.56}\downarrow$ & $\textbf{219}$ & $\textbf{27.2\%}$ & $\textbf{0}$\\
  \cdashline{2-8}[5.0pt/2pt]
   \specialrule{0em}{1pt}{1pt}
 & APS \cite{wang2020revisiting} & Automated & $68.96$ & $3.14\downarrow$ & $156$ & $48.2\%$& $160$ \\
 & \textbf{ITPruner} & \textbf{Automated} & $\textbf{69.13}$ & $\textbf{2.97}\downarrow$ & $\textbf{149}$ & $\textbf{50.5\%}$ & $\textbf{0}$\\ 
 \midrule
 \multirow{18}{*}{ResNet-50 \cite{he2016deep}} & Baseline & $-$ & $76.88$ & $-$ & $4089$ & $-$ \\
 & ABCPruner \cite{linchannel} & Automated & $74.84$ & $2.04\downarrow$ & $2560$ & $37.4\%$ & $12$\\
    & GAL \cite{lin2019towards} & Local Metric & $71.95$ & $4.93\downarrow$ & $2330$ & $43.0\%$ & $-$\\
  & Taylor \cite{molchanov2019importance} & Global Metric & $\textcolor{black}{74.50}$ & $2.38\downarrow$ & $2250$ & $45.0\%$ & $-$\\
  & HRank \cite{lin2020hrank} & Local Metric & $74.98$ & $1.90\downarrow$ & $2300$ & $43.8\%$ & $-$\\
  &\color{black}  Random Pruning \cite{Li_random} &\color{black}  Local Metric &\color{black}  $74.15$ &\color{black}  $2.73\downarrow$ &\color{black}  $2074$ &\color{black}  $\textcolor{black}{49.3}\%$ &\color{black}  25\\
   &\color{black}  Random Pruning \cite{Li_random} &\color{black}  Local Metric &\color{black}  $74.78$ &\color{black}  $\textcolor{black}{2.10}\downarrow$ &\color{black}  $2074$ &\color{black}  $\textcolor{black}{49.3\%}$ &\color{black}  75\\
   &\color{black}  Random Pruning \cite{Li_random} &\color{black}  Local Metric &\color{black}  $75.13$ &\color{black}  $1.75\downarrow$ &\color{black}  $2074$ &\color{black}  $\textcolor{black}{49.3\%}$ &\color{black}  120\\
   &\color{black}  MFP \cite{He_fiter} &\color{black}  Global Metric &\color{black}  $74.86$ &\color{black}  $2.02\downarrow$ &\color{black}  $1942$ &\color{black}  $53.5\%$ &\color{black}  $-$\\
   & \textcolor{black}{DECODER-6 \cite{DECORE}} & \textcolor{black}{Automated} & $\textcolor{black}{74.58}$ & $\textcolor{black}{\textcolor{black}{2.30}\downarrow}$ & $\textcolor{black}{2372}$ & $\textcolor{black}{42.0\%}$ & $\textcolor{black}{20}$\\
   & ITPruner & Automated & $75.28$ & $1.60\downarrow$ & $1943$ & $52.5\%$ & $0$\\
   & \textbf{ITPruner} & Automated & $\textbf{75.75}$ & $\textbf{1.13}\downarrow$ & $\textbf{2236}$ & $\textbf{45.3\%}$ & $\textbf{0}$\\
   \cdashline{2-8}[5.0pt/2pt]
   \specialrule{0em}{1pt}{1pt}
   & AutoSlim$^*$ \cite{yu2019autoslim} & Automated & $\textcolor{black}{74.90}$ & $1.98\downarrow$ & $2300$ & $43.8\%$ & $100$\\
    & TAS$^*$ \cite{dong2019network} & Automated & $76.20$ & $0.68\downarrow$ & $2310$ & $43.5\%$ & $600$\\
    & MetaPruning$^*$ \cite{liu2019metapruning} & Automated & $72.17$ & $4.71\downarrow$ & $2260$ & $44.7\%$ & $60$\\
    & FPGM$^*$ \cite{he2019filter} & Local Metric & $75.59$ & $1.29\downarrow$ & $2167$ & $47.0\%$ & $-$\\
   &\color{black}  DS-MBNet-S++$^*$ \cite{Li_DSNet} &\color{black}  Automated &\color{black}  $\textcolor{black}{76.40}$ &\color{black}  $0.48\downarrow$ &\color{black}  $2300$ &\color{black}  $43.8\%$ &\color{black}  $160$\\
   & \textbf{ITPruner}$^*$ & Automated & $\textbf{78.05}$ & $\textbf{1.17}\uparrow$ & $\textbf{1943}$ & $\textbf{52.5\%}$ & $\textbf{0}$\\
\midrule
\multirow{4}{*}{\textcolor{black}{DeiT \cite{touvron2021training}}} & \textcolor{black}{Baseline} & $\textcolor{black}{-}$ & $\textcolor{black}{81.80}$ & $\textcolor{black}{-}$ & $\textcolor{black}{16.8}$ & $\textcolor{black}{-}$ & $\textcolor{black}{-}$\\
 & \textcolor{black}{IA-RED$^2$ \cite{pan2021ia}} & \textcolor{black}{Automated} & $\textcolor{black}{80.30}$ & $\textcolor{black}{1.50\downarrow}$ & $\textcolor{black}{11.8}$ & $\textcolor{black}{29.8\%}$ & $\textcolor{black}{90}$\\
 & \textcolor{black}{WDPruning \cite{yu2022width}} & \textcolor{black}{Automated} & $\textcolor{black}{80.76}$ & $\textcolor{black}{1.04\downarrow}$ & $\textcolor{black}{9.9}$ & $\textcolor{black}{41.1\%}$ & $\textcolor{black}{100}$\\
  & \textcolor{black}{\textbf{ITPruner}} & \textcolor{black}{\textbf{Automated}} & $\textcolor{black}{\textbf{81.20}}$ & $\textcolor{black}{\textbf{0.60}\downarrow}$ & $\textcolor{black}{\textbf{9.9}}$ & $\textcolor{black}{\textbf{41.1}\%}$ & $\textcolor{black}{\textbf{0}}$\\
 \bottomrule
\end{tabular}
}
\end{center}

\end{table*}

\begin{figure*} [htb]
\centering
\includegraphics[width=1.0\linewidth]{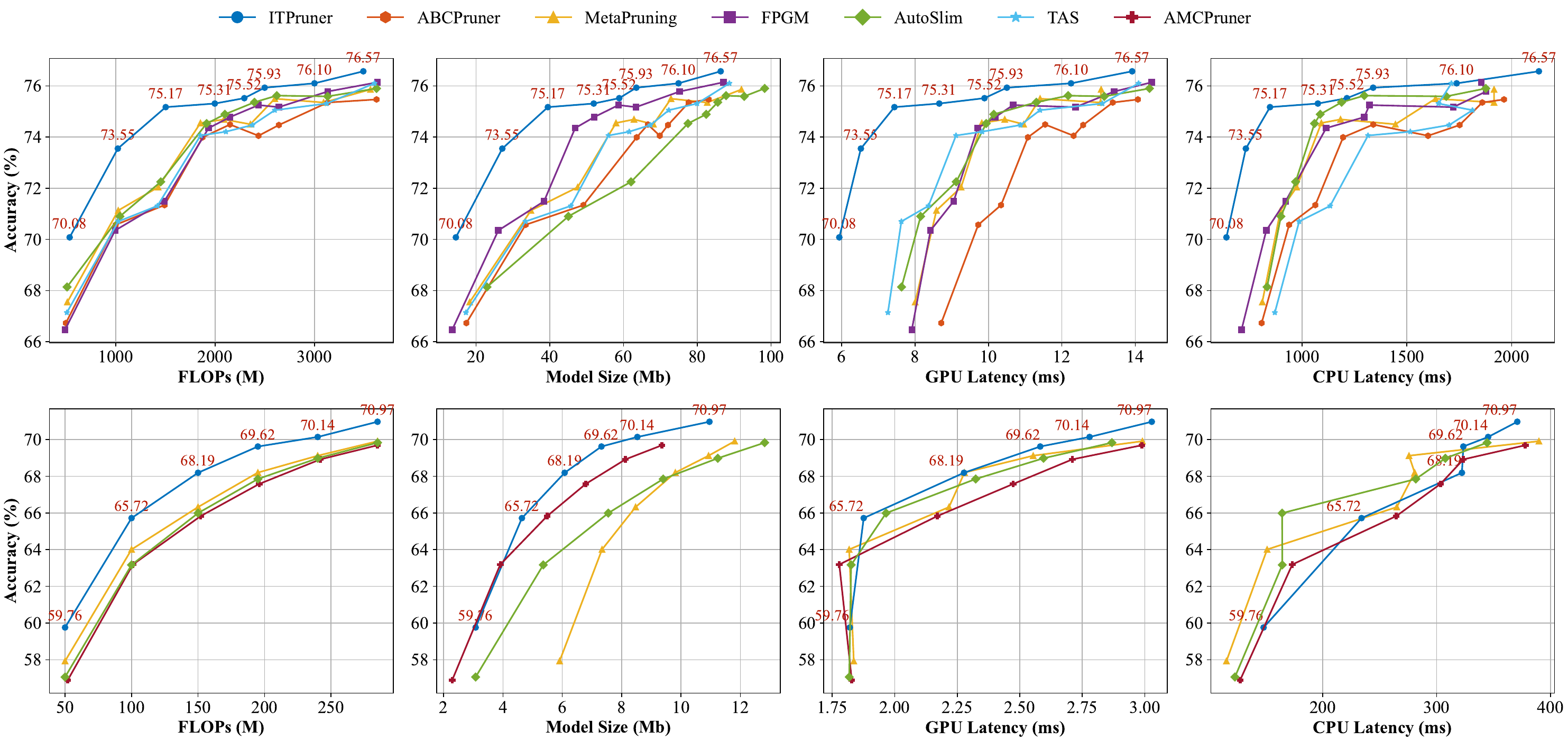}
\caption{{FLOPs, Size, GPU Latency and CPU Latency vs accuracy tradeoffs for different pruning methods and ITPruner on ResNet-50 (up) and MobileNetV1 (bottom). All the models are searched or adjusted according to FLOPs. ITPruner clearly \textcolor{black}{outperforms} the other baselines with a clear margin in most cases.}}
\label{fig:res50_mbv1_pruning}
\end{figure*}

\begin{table}[]
\caption{Inference time under different batch sizes of ITPruner compressed networks and the corresponding baselines on a mobile CPU of Google Pixel $2$. Our method is emphasized in bold format.}
\setlength{\tabcolsep}{1mm}{
\begin{tabular}{ccccc}
\toprule
\multirow{2}{*}{Model} & \multirow{2}{*}{Method} & \multicolumn{3}{c}{Batch Size} \\ 
\cmidrule{3-5}
 &  & 1 & 4 & 8 \\ \midrule
\multirow{2}{*}{MobileNetV1} & Baseline & $28$ms & $125$ms & $252$ms \\
 & \textbf{Ours($\textbf{283}$M)} & $\textbf{15}$\textbf{ms} & $\textbf{70}$\textbf{ms} & $\textbf{135}$\textbf{ms} \\ 
 & \textbf{Ours($\textbf{149}$M)} & $\textbf{10}$\textbf{ms} & $\textbf{45}$\textbf{ms} & $\textbf{84}$\textbf{ms} \\ \midrule
\multirow{2}{*}{MobileNetV2} & Baseline & $21$ms & $94$ms & $182$ms \\
 & \textbf{Ours($\textbf{217}$M)} & $\textbf{17}$\textbf{ms} & $\textbf{75}$\textbf{ms} & $\textbf{144}$\textbf{ms} \\ 
 & \textbf{Ours($\textbf{148}$M)} & $\textbf{13}$\textbf{ms} & $\textbf{55}$\textbf{ms} & $\textbf{110}$\textbf{ms} \\ 
 \bottomrule
\end{tabular}}
\label{tab:time_cost}
\end{table}

\section{Experiment}\label{sec:expri}

We quantitatively demonstrate the robustness and efficiency of ITPruner in this section. 
We first describe the detailed experimental settings of the proposed algorithm in Sec.~\ref{subsec:experiset}. 
Then, we apply ITPruner to the {automated} network pruning for image classification on the widely-used CIFAR-10 \cite{lecun1998gradient} and ImageNet \cite{russakovsky2015imagenet} datasets with different constraints in Sec.~\ref{subsec:comp_sota}. 
To further understand the ITPruner, we conduct a few ablation studies to show the effectiveness of our algorithm in Sec.~\ref{subsec:abla}. 
The experiments are done with NVIDIA Tesla V100, and the algorithm is implemented using PyTorch \cite{paszke2017automatic}. 
We have also released all the source code. 

\subsection{Experimental Settings}\label{subsec:experiset}

\textcolor{black}{All of our experiments are conducted on PyTorch with Nvidia 3090 and A100 GPUs.} We use the same datasets and evaluation {metrics} with existing compression methods \cite{yang2018netadapt, wang2020revisiting, dong2019network, liu2019metapruning, yu2019autoslim, he2018amc}. 
First, most of the experiments are conducted on \textcolor{black}{image classification benchmark} CIFAR-10, which has $50$K training images and $10$K testing images from 10 classes with a resolution of $32 \times 32$. 
The color intensities of all images are normalized to $[-1, +1]$. 
To further evaluate the generalization capability, we also evaluate the classification accuracy on \textcolor{black}{another image classification benchmark} ImageNet, which consists of $1,000$ classes with $1.28$M training images and $50$K validation images. 
Here, we consider the input image size is $224 \times 224$. 
We compare different methods under similar baselines, training conditions and search spaces in our experiment. 
We elaborate on training conditions as follows. 

For the standard training, all networks are trained via the SGD with a momentum $0.9$. We train the founded network over $300$ epochs in CIFAR-10. In ImageNet, we train $120$ and $250$ epochs for ResNet and MobileNet, respectively \footnote{Such setting is consistent with the previous compression works for a fair comparison.}. We set the batch size of $256$ and the initial learning rate of $0.1$ and cosine learning rate strategy in both datasets. 
We also use basic data augmentation strategies in both datasets: images are randomly cropped and flipped, then resized to the corresponding input sizes as we mentioned before and finally normalized with their mean and standard deviation.

\subsection{Comparison with State-of-the-Art Methods}\label{subsec:comp_sota}

\textbf{CIFAR-10}. We first compare the proposed ITPruner with different types of compression methods on VGG, ResNet-20 and ResNet-56. 
Results are summarized in Tab.~\ref{tab:cifar_result}. 
Obviously, \emph{without any search cost}, our method achieves the best compression trade-off compared to other methods. 
Specifically, ITPruner shows a larger reduction of FLOPs but with better performance. 
For example, compared to the local metric method HRank \cite{lin2020hrank}, ITPruner achieves higher reductions of FLOPs ($68.5\%$ vs $65.3\%$) with higher top-1 accuracy ($94.00$ vs $92.34$). 
Meanwhile, compared to search based method TAS \cite{dong2019network}, ITPruner yields better performance on ResNet-56 ($52.4\%$ vs. $49.5\%$ in FLOPs reduction, and $93.43$ vs $92.87$ in top-1 accuracy). 
Besides, the proposed ITPruner is orthonormal to the local metric based method, which means that ITPruner is capable of integrating these methods to achieve better performance. 
As we can see, integrating FPGM \cite{he2019filter} with ITPruner achieves a further improvement in accuracy, which is even better than the baseline. 
Another interesting observation of Tab.~\ref{tab:cifar_result} is the performance rank between search and local metric based methods. 
In particular, search based methods show a significant performance gap on the efficient model like ResNet-20, while a worse performance on large models such as VGG and ResNet-56.
Such an observation provides a suggestion on how to select compression tools for different models. 

{As we mentioned before, ITPruner is orthogonal to traditional pruning methods, including structured pruning and unstructured pruning. We thus further conduct extensive experiments by integrating ITPruner with network sparsification, such as LTH~\cite{frankle2018lottery}, CS~\cite{savarese2020winning}, L$0$~\cite{louizos2018learning}, STR~\cite{kusupati2020soft} and DNW~\cite{wortsman2019discovering}. As illustrated in Fig.~\ref{fig:Unstructured_pruning_cifar}, our method still achieves the best accuracy under different sparsity ratios. For example, ITPruner yields $99\%$ compression rate with only a decrease of $2.02\%$ in Top-1 accuracy on CIFAR-10, which is $7.65\%$ and $28.38\%$ lower than DNW and L$0$. }

\textbf{ImageNet 2012}. We further compare our ITPruner scheme with other methods on widely-used ResNet \cite{he2016deep}, MobileNetV1\cite{howard2017mobilenets} and MobileNetV2 \cite{sandler2018mobilenetv2}. 
As shown in Tab.~\ref{tab:imagenet_result}, our method still achieves the best trade-off compared to different types of methods. 
Specifically, ITPruner yields $2.21\times$ compression rate with only a decrease of $1.13\%$ in Top-1 accuracy on ResNet-50. 
Similar results are reported by Tab.~\ref{tab:imagenet_result} for other backbones.

In particular, the proposed method shows a clear performance gap compared to other methods when compressing compact models MobileNetV1 and MobileNetV2.
For example, ITPruner achieves a similar reduction of FLOPs with a much lower accuracy drop ($0.8$ vs $1.3$) in MobileNetV1 compared to widely used AMC \cite{he2018amc}. 
Moreover, we also demonstrate the effectiveness of networks adapted by ITPruner on Google Pixel $2$. As shown in Tab.~\ref{tab:time_cost}, ITPruner achieves $3\times$ and $1.65\times$ acceleration rates with batch size $8$ on MobileNetV1 and MobileNetV2, respectively. 

{Notably, we use $^*$ to denote the results reported with knowledge distillation/expanded search space/extra training tricks in Tab.~\ref{tab:imagenet_result}. 
Meanwhile, ITPruner without any tricks shows comparable performance to these methods, which implicitly demonstrates the effectiveness of our method. 
Moreover, incorporating with these tricks, ITPruner shows a significant improvement ($78.09$ vs $75.59$) with a lower FLOPs ($1,943$ vs $2,167$). 
Therefore, to make a fair comparison, we first collect the codes of the compared baselines. 
And then, all the methods are searched in the same search space. 
The compressed models that show similar performance to the original official codes are selected. 
After that, the searched models are trained under the same training conditions. Specifically, following most previous works, we train $120$ and $250$ epochs for ResNet-50 and MobileNetV1, respectively. 
Meanwhile, the other settings are directly adopted from widely-used Torchvision. Note that all the models are searched with FLOPs constraints. 
In other words, it is fairer to compare different compressed models with such metric. 
Meanwhile, the GPU and CPU Latency is extremely unstable in MobileNetV1, which are highly influenced by the hyperparameters such as batchsize. 
As illustrated in Fig.~\ref{fig:res50_mbv1_pruning}, ITPruner clearly outperforms the other baselines by a large margin in most cases.} \textcolor{black}{To further expand the experimental scope, we adopt ITPruner on ViT compression, where the results are demonstrated in Tab.~\ref{tab:imagenet_result}. As we can see, ITPruner is still effective in ViT pruning. That is, ITPruner is capable of compressing $41\%$ FLOPs with only a $0.6$ performance drop on DeiT \cite{touvron2021training}, which is $0.9$ and $0.44$ better than the previous baselines \cite{pan2021ia, yu2022width}.}

\begin{table*}[h]
\color{black}
\caption{Top $1$ accuracy, compression ratio and search cost of DeiT and Swin on ImageNet. Specifically, `$\uparrow\downarrow$' denotes the increase and the decrease of accuracy comparing to baseline models, and `Ratio$\downarrow$' indicates the reduction of FLOPs. Notability, metric based methods usually integrate the search with the training process. It is thus hard to recognize search epochs in these methods and `-' stands for unavailable records. The proposed method is emphasized in bold format.}
\begin{center}
\setlength{\tabcolsep}{0.5mm}{
\begin{tabular}{cccccccc}
\toprule
\multirow{2}{*}{\textbf{Model}} & \multirow{2}{*}{\textbf{Method}} & \multirow{2}{*}{\textbf{Type}} & \textbf{Top-1 Acc} & \multirow{2}{*}{$\uparrow\downarrow$} & \textbf{FLOPs} & \multirow{2}{*}{\textbf{Ratio$\downarrow$}} & \textbf{Search}\\
& & & ($\%$) & &\textbf{(G)} & &\textbf{Epoch} \\
\midrule
\multirow{5}{*}{\textcolor{black}{DeiT-B \cite{touvron2021training}}} & \textcolor{black}{Baseline} & $\textcolor{black}{-}$ & $\textcolor{black}{81.8}$ & $\textcolor{black}{-}$ & $\textcolor{black}{16.8}$ & $\textcolor{black}{-}$ & $\textcolor{black}{-}$\\
& \textcolor{black}{IA-RED$^2$ \cite{pan2021ia}} & \textcolor{black}{Automated} & $\textcolor{black}{80.3}$ & $\textcolor{black}{1.5\downarrow}$ & $\textcolor{black}{11.8}$ & $\textcolor{black}{29.8\%}$ & $\textcolor{black}{90}$\\
& \textcolor{black}{WDPruning \cite{yu2022width}} & \textcolor{black}{Automated} & $\textcolor{black}{80.76}$ & $\textcolor{black}{1.04\downarrow}$ & $\textcolor{black}{9.9}$ & $\textcolor{black}{41.1\%}$ & $\textcolor{black}{100}$\\
& \textcolor{black}{NViT \cite{Yang2023nvit}} & \textcolor{black}{Automated} & $\textcolor{black}{83.29}$ & $\textcolor{black}{1.49\uparrow}$ & $\textcolor{black}{6.8}$ & $\textcolor{black}{59.5\%}$ & $\textcolor{black}{300}$\\
& \textcolor{black}{\textbf{ITPruner}} & \textcolor{black}{\textbf{Automated}} & $\textcolor{black}{\textbf{83.30}}$ & $\textcolor{black}{\textbf{1.50}\uparrow}$ & $\textcolor{black}{\textbf{6.8}}$ & $\textcolor{black}{\textbf{59.5}\%}$ & $\textcolor{black}{\textbf{0}}$\\
\midrule
\multirow{5}{*}{\textcolor{black}{DeiT-S \cite{touvron2021training}}} & \textcolor{black}{Baseline} & $\textcolor{black}{-}$ & $\textcolor{black}{79.8}$ & $\textcolor{black}{-}$ & $\textcolor{black}{4.6}$ & $\textcolor{black}{-}$ & $\textcolor{black}{-}$\\
& \textcolor{black}{IA-RED$^2$ \cite{pan2021ia}} & \textcolor{black}{Automated} & $\textcolor{black}{79.1}$ & $\textcolor{black}{0.7\downarrow}$ & $\textcolor{black}{-}$ & $\textcolor{black}{-}$ & $\textcolor{black}{90}$\\
& \textcolor{black}{WDPruning \cite{yu2022width}} & \textcolor{black}{Automated} & $\textcolor{black}{78.38}$ & $\textcolor{black}{1.42\downarrow}$ & $\textcolor{black}{2.6}$ & $\textcolor{black}{43.5\%}$ & $\textcolor{black}{100}$\\
& \textcolor{black}{NViT \cite{Yang2023nvit}} & \textcolor{black}{Automated} & $\textcolor{black}{82.19}$ & $\textcolor{black}{2.39\uparrow}$ & $\textcolor{black}{4.2}$ & $\textcolor{black}{8.7\%}$ & $\textcolor{black}{300}$\\
& \textcolor{black}{\textbf{ITPruner}} & \textcolor{black}{\textbf{Automated}} & $\textcolor{black}{\textbf{82.50}}$ & $\textcolor{black}{\textbf{2.70}\uparrow}$ & $\textcolor{black}{\textbf{4.2}}$ & $\textcolor{black}{\textbf{8.7}\%}$ & $\textcolor{black}{\textbf{0}}$\\
\midrule
\multirow{5}{*}{\textcolor{black}{DeiT-T \cite{touvron2021training}}} & \textcolor{black}{Baseline} & $\textcolor{black}{-}$ & $\textcolor{black}{74.50}$ & $\textcolor{black}{-}$ & $\textcolor{black}{1.3}$ & $\textcolor{black}{-}$ & $\textcolor{black}{-}$\\
& \textcolor{black}{WDPruning \cite{yu2022width}} & \textcolor{black}{Automated} & $\textcolor{black}{70.34}$ & $\textcolor{black}{4.16\downarrow}$ & $\textcolor{black}{0.7}$ & $\textcolor{black}{46.2\%}$ & $\textcolor{black}{100}$\\
& \textcolor{black}{NViT \cite{Yang2023nvit}} & \textcolor{black}{Automated} & $\textcolor{black}{76.21}$ & $\textcolor{black}{1.71\uparrow}$ & $\textcolor{black}{1.3}$ & $\textcolor{black}{0\%}$ & $\textcolor{black}{300}$\\
& \textcolor{black}{\textbf{ITPruner}} & \textcolor{black}{\textbf{Automated}} & $\textcolor{black}{\textbf{76.22}}$ & $\textcolor{black}{\textbf{1.72}\uparrow}$ & $\textcolor{black}{\textbf{1.3}}$ & $\textcolor{black}{\textbf{0}\%}$ & $\textcolor{black}{\textbf{0}}$\\
\midrule
\multirow{3}{*}{\textcolor{black}{Swin-S \cite{Liu2021swin}}} & \textcolor{black}{Baseline} & $\textcolor{black}{-}$ & $\textcolor{black}{83.00}$ & $\textcolor{black}{-}$ & $\textcolor{black}{8.7}$ & $\textcolor{black}{-}$ & $\textcolor{black}{-}$\\
& \textcolor{black}{NViT \cite{Yang2023nvit}} & \textcolor{black}{Automated} & $\textcolor{black}{82.95}$ & $\textcolor{black}{0.05\downarrow}$ & $\textcolor{black}{6.2}$ & $\textcolor{black}{28.7\%}$ & $\textcolor{black}{300}$\\
& \textcolor{black}{\textbf{ITPruner}} & \textcolor{black}{\textbf{Automated}} & $\textcolor{black}{\textbf{82.90}}$ & $\textcolor{black}{\textbf{0.1}\downarrow}$ & $\textcolor{black}{\textbf{6.2}}$ & $\textcolor{black}{\textbf{28.7}\%}$ & $\textcolor{black}{\textbf{0}}$\\
\bottomrule
\end{tabular}}
\end{center}
\label{tab:imagenet_result_transformer1}
\end{table*}

\textcolor{black}{
\textbf{Latest vision transformer pruning papers.} Recent advances in vision transformer pruning have demonstrated significant progress in model compression while maintaining or even improving model performance. We conducted extensive experiments comparing ITPruner with several state-of-the-art pruning methods, including NViT \cite{Yang2023nvit}, IA-RED$^2$ \cite{pan2021ia}, and WDPruning \cite{yu2022width}. While NViT, the current state-of-the-art approach, employs a sophisticated Hessian-based structural pruning criterion combined with latency-aware regularization to optimize module-wise pruning rates, our proposed ITPruner takes a fundamentally different approach by leveraging the information bottleneck principle.}

\textcolor{black}{
The experimental results, as detailed in Table~\ref{tab:imagenet_result_transformer1}, demonstrate ITPruner's superior performance across various model architectures. For DeiT-B, ITPruner achieves a remarkable 1.50\% accuracy improvement while reducing FLOPs by 59.5\%. Similarly, on DeiT-S, ITPruner surpasses all competing methods with a 2.70\% accuracy gain and 8.7\% FLOPs reduction. Even for the compact DeiT-T architecture, ITPruner maintains its advantage with a 1.72\% accuracy improvement.}

\textcolor{black}{
Notably, when compared to NViT, ITPruner consistently demonstrates better or comparable performance. For instance, ITPruner achieves higher ImageNet-1k accuracy on DeiT-S (+0.31\%) while matching the FLOPs reduction. On Swin-S, although both methods show slight accuracy degradation, ITPruner maintains competitive performance (-0.1\%) while achieving the same 28.7\% FLOPs reduction.}

\textcolor{black}{
A particularly distinctive advantage of ITPruner is its computational efficiency during the pruning process. While other methods require significant search epochs (e.g., NViT requires 300 epochs, WDPruning needs 100 epochs), ITPruner achieves these superior results with zero search epochs, demonstrating its remarkable efficiency in the pruning process. This characteristic makes ITPruner particularly attractive for practical applications where computational resources and time constraints are important considerations.}

\textcolor{black}{
These comprehensive results, as presented in Table~\ref{tab:imagenet_result_transformer1}, establish ITPruner as a highly effective and efficient approach for vision transformer pruning, offering a compelling combination of accuracy improvement, model compression, and computational efficiency.
}

\textcolor{black}{
\textbf{Comprehensive Evaluation Across Vision Transformers.} We conducted extensive experiments to evaluate ITPruner's effectiveness across a diverse range of transformer-based vision models, encompassing various architectures, scales, and pre-training paradigms. Our results demonstrate consistent superior performance while maintaining remarkable computational efficiency.}

\textcolor{black}{
In the context of self-supervised pre-trained Vision Transformers (ViTs), ITPruner demonstrates exceptional performance across different model scales. For tiny-scale models (5.8M parameters), ITPruner achieves an impressive 81.6\% top-1 accuracy on ImageNet-1K, substantially outperforming established methods such as MAE (71.6\%), MoCo (73.3\%), and TinyMIM (75.8\%). This performance advantage extends to semantic segmentation tasks, where ITPruner achieves 45.7\% mIoU on ADE20K, surpassing comparable methods including SparseMAE (45.2\%) and MAE-lite (43.9\%). For small-scale models (11.3M parameters), ITPruner maintains competitive performance with 82.4\% top-1 accuracy, comparing favorably to more complex approaches like TinyMIM (83.0\%) and G2SD (82.0\%), while requiring no search epochs.}

\textcolor{black}{
In the domain of vision-language models, particularly CLIP-based architectures, ITPruner achieves state-of-the-art results across multiple benchmarks. On the MSCOCO dataset, ITPruner sets new standards with 58.4\% TR@1 and 41.4\% IR@1, surpassing previous best results. Similarly, on Flickr30K, ITPruner achieves exceptional performance with 86.7\% TR@1 and 70.0\% IR@1, while maintaining competitive performance with reduced model dimensions. These results are particularly noteworthy when compared to larger models like CLIP-ViT-L/14 and other efficient variants such as EfficientVLM and TinyCLIP.}

\textcolor{black}{
For downstream object detection tasks, ITPruner demonstrates remarkable efficiency while maintaining competitive accuracy. When applied to DINO with a Swin-T backbone, ITPruner achieves 54.0 AP on COCO detection, virtually matching DINO's performance (54.1 AP) while dramatically reducing encoder GFLOPs from 137 to 36. This significant efficiency improvement is achieved without compromising performance across various object scales (AP$_S$=38.0, AP$_M$=57.0, AP$_L$=69.2). Similar efficiency gains are observed with the ResNet-50 backbone, where ITPruner maintains competitive performance (50.3 AP) while reducing computational overhead substantially.}

\textcolor{black}{
A particularly distinctive feature of ITPruner is its ability to achieve these results without requiring any search epochs, in stark contrast to other methods that typically require hundreds of epochs (e.g., MAE: 1600 epochs, TinyMIM: 300 epochs). This characteristic makes ITPruner exceptionally practical for resource-constrained scenarios while maintaining competitive or superior performance across different architecture scales (tiny to large) and various pre-training approaches (CLIP, MAE, DINO).}

\textcolor{black}{
Detailed experimental results presented in Tables \ref{itpruner_clip}, \ref{itpruner_mae}, and \ref{itpruner_dino} consistently demonstrate ITPruner's effectiveness in balancing model performance and computational efficiency across diverse vision tasks. The method's ability to maintain or improve performance while significantly reducing computational requirements, coupled with its zero-epoch search requirement, establishes ITPruner as a practical and powerful solution for efficient vision transformer deployment.
}

\textcolor{black}{
\textbf{Evaluation on Vision-Language Models.} We conducted comprehensive experiments to evaluate ITPruner's effectiveness across various multimodal scenarios and language model configurations, demonstrating its exceptional capability in maintaining model performance while achieving significant parameter reduction.}

\textcolor{black}{
In the context of vision-language models, our experiments with LaVIN-7B on the ScienceQA dataset reveal remarkable results across different pruning ratios. With 30\% pruning, ITPruner achieves an impressive 90.29\% average accuracy, substantially outperforming the baseline LaVIN-7B-MaP (85.85\%). This strong performance is maintained even under more aggressive 50\% pruning, where ITPruner achieves 89.39\% accuracy, nearly matching the dense model's 89.41\% performance while using only half the parameters.}

\textcolor{black}{
The subject-wise analysis demonstrates ITPruner's consistent excellence across diverse domains. With 30\% pruning, it achieves 90.05\% accuracy in natural sciences, 94.83\% in social sciences, and 87.09\% in language sciences, surpassing both specialized models and larger language models. Notably, these results exceed GPT-4's performance in several subject areas while consistently outperforming GPT-3.5, both with and without Chain-of-Thought prompting.}

\textcolor{black}{
ITPruner demonstrates particularly robust cross-modal understanding capabilities. At 30\% pruning, it achieves 89.24\% accuracy on image contexts and 89.98\% on text contexts, surpassing specialized vision-language models including LLaVA (88.00\% on image contexts) and LaVIN-7B-MMA (87.46\% on image contexts). This performance is especially noteworthy given the reduced parameter count and demonstrates ITPruner's ability to maintain strong cross-modal understanding even under significant pruning.}

\textcolor{black}{
In the context of large language model pruning, as shown in Table~\ref{tab:itpruner_on_LLM}, ITPruner demonstrates superior performance across both structured and unstructured pruning scenarios. When combined with existing pruning methods, ITPruner consistently improves their performance. For instance, integrating ITPruner with Wanda-sp at 30\% structured pruning improves performance across all evaluation metrics, achieving better perplexity on WikiText2 (16.25 vs 24.53) and higher average accuracy across zero-shot tasks (59.93\% vs 57.66\%).}

\textcolor{black}{
The grade-level analysis further validates ITPruner's versatility, maintaining strong performance across different educational levels. With 30\% pruning, it achieves 90.68\% accuracy for grades 1-6 and 89.58\% for grades 7-12, demonstrating consistent performance across varying complexity levels. This robust performance across grade levels indicates ITPruner's ability to maintain sophisticated reasoning capabilities while achieving significant parameter reduction.}

\textcolor{black}{
Particularly noteworthy is ITPruner's efficiency in parameter utilization. With a trainable parameter ratio of just 0.97\textperthousand, it achieves performance comparable to or better than models with full parameter counts, while maintaining this efficiency even under different pruning ratios. This characteristic makes ITPruner particularly valuable for practical deployments where computational resources are constrained.}

\textcolor{black}{
These comprehensive results, detailed in Tables \ref{tab:itpruner_on_LLM} and \ref{tab:main_mm}, establish ITPruner as a highly effective approach for vision-language model pruning, offering a compelling combination of strong performance, efficient parameter utilization, and robust cross-modal capabilities. The method's ability to maintain or improve performance while significantly reducing parameter count makes it an attractive solution for efficient deployment of vision-language models in resource-constrained environments.
}

\begin{figure}
\centering
\includegraphics[width=0.95\linewidth]{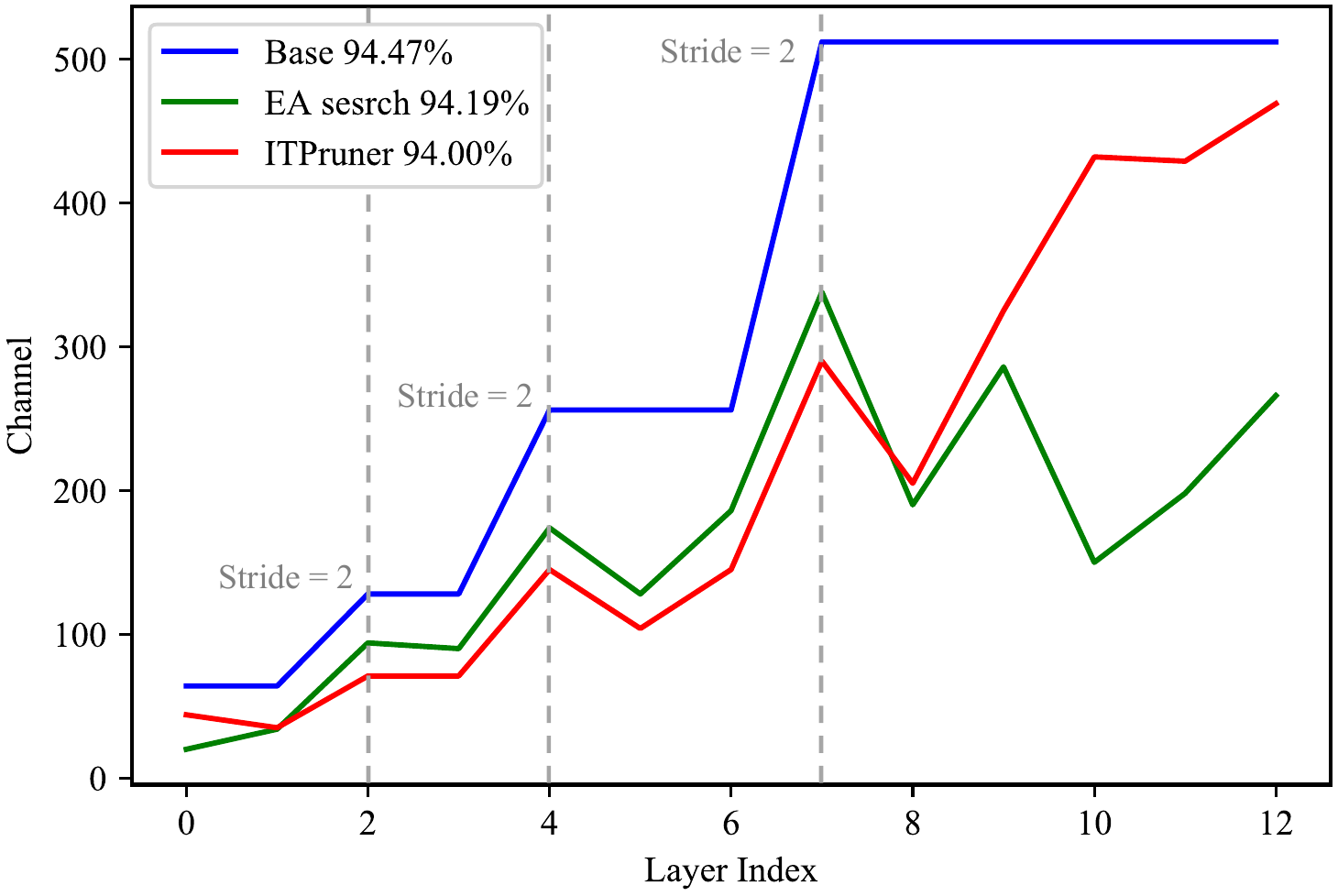}
\caption{The number of channels for the VGG found by ITPruner (red) and evolutionary algorithm (green). The blue line denotes the uncompressed VGG architecture.}
\label{fig:architecture_analysis}
\end{figure}

\subsection{Ablation Study}\label{subsec:abla}

In this section, we analyze the architecture found by ITPruner and the influence of the hyper-parameters $\beta$ and $n$. 
To sum up, the architecture found by ITPruner shows surprisingly similar characteristics compared with the optimal one. 
Meanwhile, ITPruner is easy to use, \emph{i.e.,} the only two hyper-parameters $\beta$ and $n$ show a negligible performance gap with different values. 
The detailed experiments are described as follows. 

\begin{figure} [tb]
\centering
\includegraphics[width=1.0\linewidth]{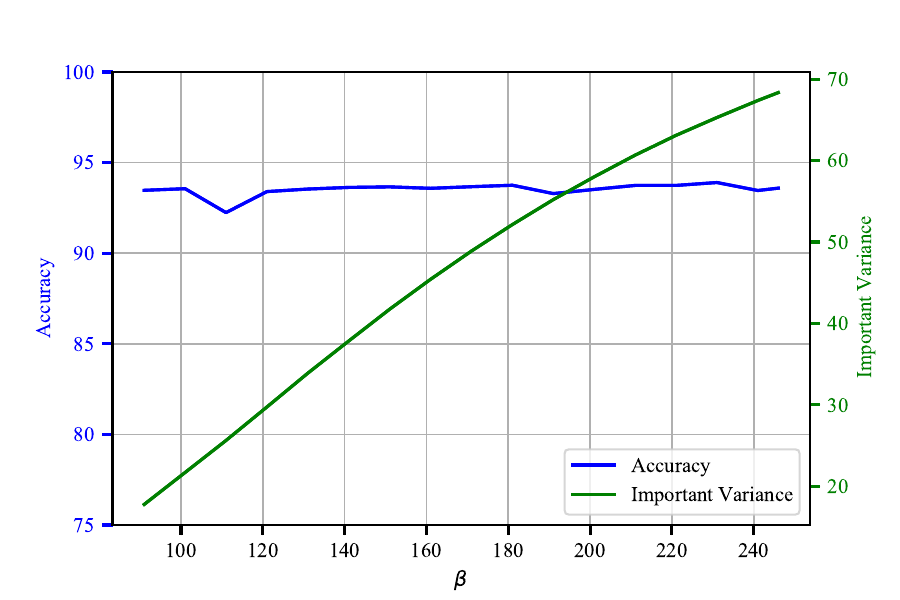}
\caption{Accuracy (blue) and variance (green) of layer-wise importance in different $\beta$.}
\label{fig:beta}
\end{figure}

\begin{figure} [tb]
\centering
\includegraphics[width=1.0\linewidth]{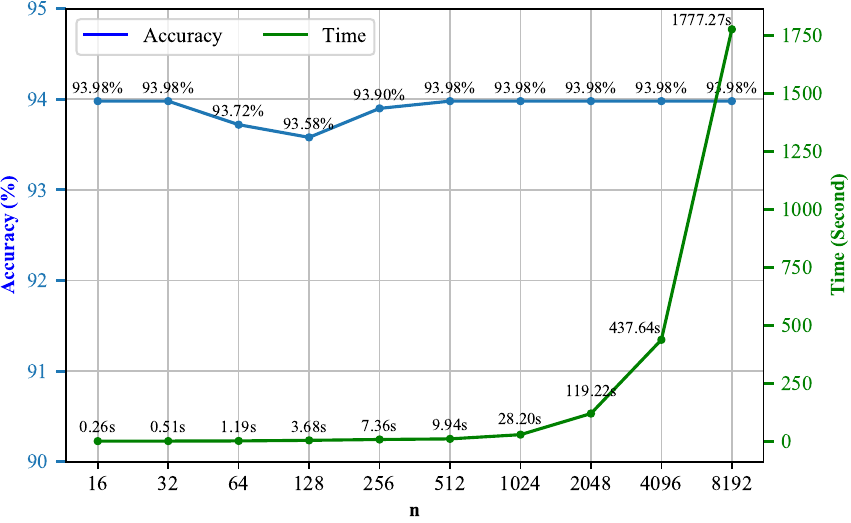}
\caption{{Accuracy (blue) and times costs (green) for calculation $\boldsymbol{H}$ in different sample size $n$.}}
\label{fig:n}
\end{figure}

\textbf{Architecture analysis.} We first adopt Evolutionary Algorithm (EA) \cite{back1996evolutionary} to find the optimal architecture, which is employed as a baseline for a better comparison. 
Specifically, we directly employ the aforementioned training hyper-parameters in the architecture evaluation step. 
Meanwhile, the search algorithm is executed several times to determine the optimal architecture, which is illustrated in the green line of Fig.~\ref{fig:architecture_analysis}. 
Notably, such a process is extremely time-consuming {and} takes a few GPU days in CIFAR-10. 
Interestingly, the proposed ITPruner finds architecture that has similar characteristics compared with the optimal architecture.
In particular, there are significant peaks in these architectures whenever there is a down sampling operation, \emph{i.e.,} stride $= 2$. 
In such an operation, the resolution is halved {and} thus needs to be compensated by more channels. 
Similar observations are also reported in MetaPruning \cite{liu2019metapruning}. 
Nevertheless, we do not need any exhausted search process to obtain such insight. 
We also notice that there is some difference in channel choice between architectures {found} by ITPruner and EA. 
That is, ITPruner is prone to assign more channels in the first and the last layers. 
Meanwhile, the performance gap between these architectures is negligible ($0.19$), which demonstrates the efficiency of our method. 

\begin{table}[tb]
\begin{center}
\caption{\label{tab:object_detection} mean Average precision (mAP), FLOPs and model size of different compression methods on the PASCAL VOC object detection dataset. All the baseline models are trained using our implementation with the same training conditions for a fair \textcolor{black}{comparison}.}
\setlength{\tabcolsep}{1mm}{
\begin{tabular}{ccccc}
\toprule
\multirow{2}{*}{\textbf{Model}}                      & \multirow{2}{*}{\textbf{Method}}          & \textbf{FLOPs} & \textbf{Size} & \textbf{mAP} \\ 
&           &  \textbf{(G)} & \textbf{(Mb)} & \textbf{(0.5)} \\ 
\midrule
\multirow{8}{*}{ResNet-50 \cite{he2016deep}} & Baseline            &   75.37        &     282.39      & 43.76    \\
& Uniform ($\times$ 0.5) & 20.93      & 125.56     & 41.94    \\
& Autoslim \cite{yu2019autoslim}       & 78.41      & 256.98     & 39.66    \\
 & ABCPruner \cite{linchannel}      & 89.02      & 260.56     & 43.04    \\
& TAS \cite{dong2019network}             & 35.56      & 140.23     & 42.02    \\
& FPGM \cite{he2019filter}           & 51.52      & 184.16     & 43.27    \\
& MetaPruning \cite{liu2019metapruning}    & 26.02      & 151.88     & 42.65    \\
\cmidrule{2-5} 
& \textbf{ITPruner}       & \textbf{23.48}      & \textbf{49.59}     & \textbf{44.65}    \\ 
\bottomrule
\end{tabular}}
\end{center}
\end{table}

\begin{table*}[th]
\color{black}
\caption{ITPruner MSCOCO and Flickr30K datasets zero-shot image-text retrieval results. $\dagger$ denotes the results in~\cite{kim2023Misalign, Lee2022Uniclip, Yang2023Alip}.}
\label{itpruner_clip}
\resizebox{1.0\textwidth}{!}{%
\begin{tabular}{l|c|c|cccccc|cccccc}
\midrule
Method & Vision Encoder & Text Encoder & \multicolumn{6}{c|}{MSCOCO (5K test set)} & \multicolumn{6}{c}{Flickr30K (1K test set)} \\
 & Width Depth & Width Depth & TR @1 & TR @5 & TR @10 & IR @1 & IR @5 & IR @10 & TR @1 & TR @5 & TR @10 & IR @1 & IR @5 & IR @10 \\
\midrule
\multicolumn{14}{l}{$Pre-trained~on~WIT-400M$} \\
CLIP-ViT-L/14~\cite{Radford2021Learning} & 1024 24 & 768 12 & 56.3 & 79.4 & 86.6 & 36.5 & 61.1 & 71.2 & 85.2 & 97.5 & 99.1 & 64.9 & 87.3 & 92.2 \\
CLIP-ViT-B/32~\cite{Radford2021Learning} & 768 12 & 512 12 & 50.1 & 75.0 & 83.5 & 30.5 & 56.0 & 66.9 & 78.8 & 94.9 & 98.2 & 58.8 & 93.6 & 90.2 \\
\midrule
\multicolumn{14}{l}{$Pre-trained~on~CC3M$} \\
EfficientVLM~\cite{Wang2021Efficientvlm} & 1024 12 & 768 6 & 46.6 & 71.7 & 81.3 & 35.9 & 61.6 & 71.8 & 78.8 & 94.9 & 98.2 & 58.8 & 93.6 & 90.2 \\
TinyCLIP~\cite{Wu2023Tinyclip} & 512 24 & 768 6 & 52.7 & 76.5 & 84.8 & 36.6 & 63.0 & 73.6 & 80.5 & 96.3 & 98.5 & 66.3 & 89.1 & 93.7 \\
$\text{MoPE-CLIP}_{large}$~\cite{Lin2024Mope} & 512 24 & 384 12 & 58.0 & 81.6 & 88.5 & 40.6 & 66.0 & 75.5 & 86.5 & 97.7 & 99.0 & 69.8 & \textbf{90.6} & 95.3 \\
ITPruner & 512 24 & 384 12 & \textbf{58.4} & \textbf{81.8} & \textbf{88.9} & \textbf{41.4} & \textbf{66.1} & \textbf{76.1} & \textbf{86.7} & \textbf{97.8} & \textbf{99.5} & \textbf{70.0} & 90.3 & \textbf{95.3} \\
$\text{DynaCLIP}_{base}$~\cite{Hou2020Dynabert} & 384 18 & 384 12 & 51.3 & 75.5 & 84.6 & 35.8 & 61.8 & 72.6 & 79.8 & 96.1 & 98.2 & 64.6 & 87.8 & 93.1 \\
$\text{DynaCLIP}_{small}$~\cite{Hou2020Dynabert} & 384 18 & 192 12 & 46.7 & 72.7 & 92.2 & 33.2 & 59.5 & 70.3 & 75.9 & 94.6 & 98.3 & 60.9 & 86.1 & 91.9 \\
$\text{MoPE-CLIP}_{base}$~\cite{Lin2024Mope} & 384 18 & 384 12 & 52.8 & 78.1 & 86.0 & 37.3 & 63.5 & 73.6 & 82.8 & 97.1 & 98.8 & 66.7 & 88.7 & 94.1 \\
ITPruner & 384 18 & 384 12 & 53.1 & 78.7 & 86.1 & 38.2 & 63.8 & 74.0 & 83.0 & 97.1 & 98.5 & 66.6 & 89.0 & 94.3 \\
$\text{MoPE-CLIP}_{small}$~\cite{Lin2024Mope} & 384 18 & 192 12 & 50.3 & 75.9 & 84.8 & 35.6 & 61.7 & 72.2 & 80.2 & 95.6 & 98.5 & 64.7 & 87.8 & 93.0 \\
ITPruner & 384 18 & 192 12 & 50.4 & 76.3 & 85.2 & 36.1 & 62.0 & 73.1 & 80.8 & 95.7 & 98.5 & 65.0 & 87.9 & 93.3 \\
\midrule
\multicolumn{14}{l}{$Pre-trained~on~YFCC15M$} \\
CLIP-ViT-B/32$^\dagger$~\cite{Radford2021Learning} & 768 12 & 512 12 & 20.8 & 43.9 & 55.7 & 13.0 & 31.7 & 42.7 & 34.9 & 63.9 & 75.9 & 23.4 & 47.2 & 58.9 \\
SLIP-ViT-B/32$^\dagger$~\cite{Mu2022Slip} & 768 12 & 512 12 & 27.7 & 52.6 & 63.9 & 18.2 & 39.2 & 51.0 & 47.8 & 76.5 & 85.9 & 32.3 & 58.7 & 68.8 \\
DeCLIP-ViT-B/32$^\dagger$~\cite{Li2021Supervision} & 768 12 & 512 12 & 28.3 & 53.2 & 64.5 & 18.4 & 39.6 & 51.4 & 51.4 & 80.2 & 88.9 & 34.3 & 60.3 & 70.7 \\
UniCLIP-ViT-B/32$^\dagger$~\cite{Lee2022Uniclip} & 768 12 & 512 12 & 32.0 & 57.7 & 69.2 & 20.2 & 43.2 & 54.4 & 52.3 & 81.6 & 89.0 & 34.8 & 62.0 & 72.0 \\
MCD-ViT-B/32$^\dagger$~\cite{kim2023Misalign} & 768 12 & 512 12 & 32.2 & 58.7 & 71.2 & 20.7 & 43.5 & 55.3 & 57.6 & 82.6 & 91.1 & 36.4 & 64.8 & 74.1 \\
ALIP-ViT-B/32$^\dagger$~\cite{Yang2023Alip} & 768 12 & 512 12 & 46.8 & 72.4 & 81.8 & 29.3 & 54.4 & 65.4 & 70.5 & 91.9 & 95.7 & 48.9 & 75.1 & 82.9 \\
$\text{MoPE-CLIP}_{base}$~\cite{Lin2024Mope} & 384 18 & 384 12 & 55.6 & 78.6 & 86.1 & \textbf{37.1} & 63.1 & 73.5 & 86.1 & 97.9 & 99.6 & 66.4 & 89.2 & 94.2 \\
ITPruner & 384 18 & 384 12 & \textbf{56.0} & \textbf{78.8} & \textbf{86.2} & 37.0 & \textbf{63.4} & \textbf{73.9} & \textbf{86.3} & \textbf{98.1} & \textbf{99.6} & \textbf{66.9} & \textbf{89.5} & \textbf{94.8} \\
\midrule
\end{tabular}
}
\end{table*}

\begin{table*}[th]
\color{black}
\caption{Results of ITPruner on ImageNet-1K and ADE20K datasets. All models undergo finetuning exclusively on ImageNet-1K. Methods marked with ``Extra Pretrained Teacher'' require a pre-trained model and cannot be trained from scratch. $\ddagger$ indicates that the model uses an MAE pre-trained model prior to sparse pre-training.}
\label{itpruner_mae}
\resizebox{1.0\textwidth}{!}{%
\begin{tabular}{l|c|c|c|c|c|c}
\midrule
Method & Params & Searching & Encoder & Extra Pretrained/ & Classification & Segmentation \\
 & (M) & epochs & ratio & Teacher & Top-1 Acc (\%) & mIoU (\%) \\
\midrule
DeiT~\cite{touvron2021training} & 5.8 & - & - & Label & 72.2 & 38.0 \\
\midrule
\multicolumn{7}{l}{$Tiny-scale$} \\
MAE~\cite{He2022Masked} & 5.8 & 1600 & 25\% & \bxmark & 71.6 & 37.6 \\
MoCo~\cite{Chen2021moco} & 5.8 & 300 & 100\% & EMA & 73.3 & 39.3 \\
TinyMIM~\cite{Ren2023Tinymim} & 5.8 & 300 & 100\% & TinyMIM-S & 75.8 & 44.0 \\
G2SD~\cite{Huang2023Generic} & 5.8 & 300 & 100\% & ViT-B & 76.3 & 41.4 \\
MAE-lite~\cite{Wang2023light} & 5.8 & 400 & 25\% & ViT-B & 76.5 & 43.9 \\
SparseMAE~\cite{Zhou2023Sparsemae} & 5.8 & 400 & 25\% & \bxmark & 80.5 & 45.2 \\
ITPruner & 5.8 & 0 & 25\% & \bxmark & 81.6 & 45.7 \\
\midrule
DeiT~\cite{touvron2021training} & 22 & - & - & Label & 79.9 & \\
\midrule
\multicolumn{7}{l}{$Small-scale$} \\
MAE~\cite{He2022Masked} & 22 & 1600 & 25\% & \bxmark & 80.6 & 42.8 \\
MoCo~\cite{Chen2021moco} & 22 & 300 & 100\% & EMA & 81.4 & 43.9 \\
DINO~\cite{Caron2021Emerging} & 22 & 300 & 100\% & EMA & 81.5 & 45.3 \\
CAE~\cite{Chen2024Context} & 22 & 300 & 100\% & DALL-E & 82.0 & - \\
TinyMIM~\cite{Ren2023Tinymim} & 22 & 300 & 100\% & TinyMIM-ViT-B & 83.0 & 48.4 \\
G2SD~\cite{Huang2023Generic} & 22 & 300 & 25\% & ViT-B & 82.0 & 46.2 \\
SparseMAE~\cite{Zhou2023Sparsemae} & 11.3 & 300 & 25\% & ViT-B$^\ddagger$ & 83.2 & 48.4 \\
SparseMAE~\cite{Zhou2023Sparsemae} & 11.3 & 400 & 25\% & \bxmark & 82.1 & 46.7 \\
ITPruner & 11.3 & 0 & 25\% & \bxmark & 82.4 & 46.9 \\
\midrule
\end{tabular}
}
\end{table*}

\begin{table*}[th]
\color{black}
\centering
\caption{Efficiency improvement results using ITPruner on Deformable DETR-based architectures. The comparison includes both efficient CNN-based models and other efficient DETR variants. ResNet-50 and Swin-T (pre-trained on ImageNet-1K) serve as backbones for all models, with the exception of EfficientDet and YOLO series.}
\label{itpruner_dino}
\resizebox{1.0\textwidth}{!}{%
\begin{tabular}{l|c|cccccc|c|c}
\midrule
Model & \#epochs & AP & AP$_{50}$ & AP$_{75}$ & AP$_S$ & AP$_M$ & AP$_L$ & GFLOPs & Encoder \\
& & & & & & & & & GFLOPs \\
\midrule
EfficientDet-D6~\cite{Tan2020Efficientdet} & - & 51.3 & - & - & - & - & - & 226 & - \\
YOLOv5-X~\cite{Glennyolov5} & - & 50.7 & - & - & - & - & - & 206 & - \\
YOLOv7-X~\cite{Wang2023yolov7} & - & 52.9 & - & - & - & - & - & 190 & - \\
\midrule
\multicolumn{10}{l}{Swin-T backbone} \\
VIDT+~\cite{Song2022VIDT} & 50 & 49.7 & 67.7 & 54.2 & 31.6 & 53.4 & 65.9 & - & - \\
D2ETR~\cite{Lin2022D2ETR} & 50 & 49.1 & - & - & - & - & - & 127 & - \\
\midrule
DINO~\cite{Caron2021Emerging} & 36 & 54.1 & 72.0 & 59.3 & 38.3 & 57.3 & 68.6 & 243 & 137 \\
Lite-DINO H3L1-(2+1)x3~\cite{Li2023Litedetr} & 36 & 53.9 & 72.0 & 58.8 & 37.9 & 57.0 & 69.1 & 159 & 53 \\
ITPruner & 0 & 54.0 & 72.0 & 59.0 & 38.0 & 57.0 & 69.2 & 142 & 36 \\
\midrule
H-DETR~\cite{Jia2023Detrs} & 36 & 53.2 & 71.5 & 58.2 & 35.9 & 56.4 & 68.2 & 234 & 137 \\
Lite-H-DETR H3L1-(2+1)x3~\cite{Li2023Litedetr} & 36 & 53.0 & 71.3 & 58.2 & 36.3 & 56.3 & 68.1 & 152 & 53 \\
ITPruner & 0 & 53.1 & 71.3 & 58.4 & 36.4 & 56.4 & 68.2 & 140 & 41 \\
\midrule
\multicolumn{10}{l}{ResNet-50 backbone} \\
DFFT~\cite{Chen2022DFFT} & 36 & 46.0 & - & - & - & - & - & 101 & 18 \\
PnP-DETR~\cite{Wang2021Pnp-detr} & 36 & 43.1 & 63.4 & 45.3 & 22.7 & 46.5 & 61.1 & 104 & 29 \\
AdaMixer~\cite{Gao2022Adamixer} & 36 & 47.0 & 66.0 & 51.1 & 30.1 & 50.2 & 61.8 & 132 & - \\
IMFA-DETR~\cite{Zhang2023Towards} & 36 & 45.5 & 45.0 & 49.3 & 27.3 & 48.3 & 61.6 & 108 & $\approx$20 \\
\midrule
DINO~\cite{Caron2021Emerging} & 36 & 50.7 & 68.6 & 55.4 & 33.5 & 54.0 & 64.8 & 235 & 137 \\
Lite-DINO H3L1-(2+1)x3~\cite{Li2023Litedetr} & 36 & 50.4 & 68.5 & 54.6 & 33.5 & 53.6 & 65.5 & 151 & 53 \\
ITPruner & 0 & 50.3 & 68.3 & 54.5 & 33.4 & 53.2 & 65.1 & 135 & 37 \\
\midrule
H-DETR~\cite{Jia2023Detrs} & 36 & 50.0 & 68.3 & 54.4 & 32.9 & 52.7 & 65.3 & 226 & 137 \\
Lite-H-DETR H3L1-(2+1)x3~\cite{Li2023Litedetr} & 36 & 49.5 & 67.6 & 53.9 & 32.0 & 52.8 & 64.0 & 142 & 53 \\
ITPruner & 0 & 49.3 & 67.3 & 53.5 & 32.1 & 52.9 & 63.8 & 133 & 44 \\
\midrule
\end{tabular}
}
\end{table*}

\begin{table*}[th]
\color{black}
    \centering
    \caption{Accuracy and Perplexity Evaluation of ITPruner on LLaMA2-7B Model with the wikitext2 Dataset and Seven Zero-Shot Tasks.}
    \label{tab:itpruner_on_LLM}
    \resizebox{\textwidth}{!}{%
    \begin{tabular}{@{}lccccccccc@{}}
    \toprule
    \textbf{Model} & \textbf{WikiText2} & \textbf{HellaSwag} & \textbf{Winogrande} & \textbf{BoolQ} & \textbf{OBQA} & \textbf{PIQA} & \textbf{ARC-e} & \textbf{ARC-c} & \textbf{Mean} \\ 
    \midrule
    Dense & 5.12 & 75.97 & 69.06 & 77.74 & 44.20 & 78.07 & 76.35 & 46.33 & 66.82 \\ 
    \midrule
    \multicolumn{10}{l}{\textit{20\% Structured Pruning}} \\ 
    LLM-Pruner~\cite{Ma2023Llm-pruner} & 10.48 & 68.72 & 63.54 & 65.14 & 39.80 & 75.90 & 68.81 & 39.42 & 60.19 \\ 
    Wanda-sp~\cite{Sun2023wanda} & 12.01 & 72.86 & 65.98 & 67.61 & 38.80 & 76.61 & 72.18 & 42.24 & 62.33 \\ 
    Wanda-sp + ITPruner & \textbf{9.92} & \textbf{73.11} & \textbf{67.98} & \textbf{69.63} & \textbf{39.90} & \textbf{76.93} & \textbf{72.19} & \textbf{43.60} & \textbf{63.33} \\ 
    \midrule
    \multicolumn{10}{l}{\textit{30\% Structured Pruning}} \\ 
    LLM-Pruner~\cite{Ma2023Llm-pruner} & 17.90 & 57.77 & 55.49 & 50.12 & 36.80 & 71.87 & 58.84 & 32.34 & 51.89 \\ 
    Wanda-sp~\cite{Sun2023wanda} & 24.53 & 65.94 & 58.33 & 63.12 & 38.00 & 74.76 & 65.99 & 37.46 & 57.66 \\ 
    Wanda-sp + ITPruner & \textbf{16.25} & \textbf{68.63} & \textbf{63.40} & \textbf{64.71} & \textbf{40.20} & \textbf{75.30} & \textbf{67.85} & \textbf{39.42} & \textbf{59.93} \\ 
    \midrule
    \multicolumn{10}{l}{\textit{50\% Unstructured Pruning}} \\ 
    Wanda~\cite{Sun2023wanda} & 6.42 & 70.84 & 68.19 & 75.99 & 41.80 & 76.00 & 72.77 & 42.41 & 64.00 \\ 
    Wanda + ITPruner & 6.31 & 72.34 & 68.50 & 76.31 & 43.06 & \textbf{76.59} & 73.12 & 43.02 & 64.71 \\ 
    sparseGPT~\cite{Frantar2023Sparsegpt} & 6.51 & 71.37 & 69.85 & 75.02 & 40.80 & 75.46 & 73.27 & 41.55 & 63.90 \\ 
    sparseGPT + ITPruner & \textbf{6.33} & \textbf{72.54} & \textbf{69.86} & \textbf{75.43} & \textbf{42.09} & 76.51 & \textbf{73.89} & \textbf{43.16} & \textbf{64.78} \\ 
    \midrule
    \multicolumn{10}{l}{\textit{60\% Unstructured Pruning}} \\ 
    Wanda~\cite{Sun2023wanda} & 10.79 & 59.25 & 64.88 & 65.84 & 38.60 & 72.09 & 64.56 & 33.02 & 56.89 \\ 
    Wanda + ITPruner & 9.32 & \textbf{67.09} & 65.72 & 68.34 & 40.50 & \textbf{74.47} & 67.36 & 38.78 & 60.32 \\ 
    sparseGPT~\cite{Frantar2023Sparsegpt} & 10.14 & 62.32 & 65.90 & 71.99 & 37.60 & 71.11 & 64.02 & 35.15 & 58.30 \\ 
    sparseGPT + ITPruner & \textbf{9.13} & {66.78} & \textbf{65.94} & \textbf{73.79} & \textbf{40.60} & {73.99} & \textbf{67.82} & \textbf{39.78} & \textbf{61.24} \\ 
    \bottomrule
    \end{tabular}%
    }
\end{table*}

\begin{table*}[th]
\color{black}
    \centering
    \caption{ITPruner sparsity comparison on ScienceQA \textit{test} set.  Question classes: NAT = natural science, SOC = social science, LAN = language science, TXT = text context, IMG = image context, NO = no context, G1-6 = grades 1-6, G7-12 = grades 7-12. \#T-Ratio denotes the ratio of trainable parameters.}
    \resizebox{1.0\linewidth}{!}{
        \begin{tabular}{lccc|ccc|ccc|cc|c}
            \toprule
            \multirow{2}{*}{Method} & \multirow{2}{*}{\#T.Ratio} & \multirow{2}{*}{Sparsity} & \multirow{2}{*}{LLM} & \multicolumn{3}{c|}{Subject} & \multicolumn{3}{c|}{Context Modality} & \multicolumn{2}{c|}{Grade} & \multirow{2}{*}{Average} \\
            &&&& NAT & SOC & LAN & TXT & IMG & NO & G1-6 & G7-12 & \\
            \midrule
            \multicolumn{13}{l}{\it Zero- $\&$ few-shot methods} \\
            Human~\cite{scienceqa} &-& - & \bxmark & 90.23 & 84.97 & 87.48 & 89.60 & 87.50 & 88.10 & 91.59 & 82.42 & 88.40 \\
            GPT-3.5~\cite{scienceqa}&-& - & \bcmark & 74.64 & 69.74 & 76.00 & 74.44 & 67.28 & 77.42 & 76.80 & 68.89 & 73.97 \\
            GPT-3.5 (CoT)~\cite{scienceqa} &-& - & \bcmark & 75.44 & 70.87 & 78.09 & 74.68 & 67.43 & 79.93 & 78.23 & 69.68 & 75.17 \\
            GPT-4~\cite{gpt4} &-& - & \bcmark & 84.06 & 73.45 & 87.36 & 81.87 & 70.75 & 90.73 & 84.69 & 79.10 & 82.69 \\ \midrule
            \multicolumn{10}{l}{\it Representative \& PEFT models}\\
            UnifiedQA~\cite{scienceqa}&100\% & 0\% & \bxmark & 71.00 & 76.04 & 78.91 & 66.42 & 66.53 & 81.81 & 77.06 & 68.82 & 74.11 \\
            MM-CoT$_{Base}$~\cite{zhang2023multimodal} &100\% & 0\% & \bxmark & 87.52 & 77.17 & 85.82 & 87.88 & 82.90 & 86.83 & 84.65 & 85.37 & 84.91 \\
            MM-CoT$_{Large}$~\cite{zhang2023multimodal}& 100\% & 0\% & \bxmark & 95.91 & 82.00 & 90.82 & 95.26 & 88.80 & 92.89 & 92.44 & 90.31 & 91.68 \\
            LLaVA~\cite{liu2023visual} &100\% & 0\% & \bcmark & 90.36 & 95.95 & 88.00 & 89.49 & 88.00 & 90.66 & 90.93 & 90.90 & 90.92 \\  
            LaVIN-7B-MMA~\cite{luo2023cheap} &0.56\textperthousand & 0\% & \bcmark & 89.25 & 94.94 & 85.24 & 88.51 & 87.46 & 88.08 & 90.16 & 88.07 & 89.41 \\                        
            \midrule
            \multicolumn{10}{l}{\it Parameter-efficient pruning methods}\\ 
            LaVIN-7B-MaP &0.97 \textperthousand & 30\% & \bcmark & 86.23 & 91.62 & 81.45 & 85.18 & 85.55 & 84.36 & 86.67 & 85.71 & 85.85 \\
            LaVIN-7B-ITPruner (Ours) &0.97\textperthousand & 30\% & \bcmark & \textbf{90.05} & \underline{94.83} & \textbf{87.09} & \textbf{89.98} & \textbf{89.24} & \textbf{88.64} & \textbf{90.68} & \textbf{89.58} & \textbf{90.29} \\
            LaVIN-7B-MaP &0.97 \textperthousand & 50\% & \bcmark & 85.65 & 91.28 & 81.14 & 84.83 & 84.99 & 83.75 & 86.14 & 84.89 & 85.33 \\
            LaVIN-7B-ITPruner (Ours) &0.97\textperthousand  & 50\% & \bcmark & \underline{89.43} & 94.04 & \underline{85.55} & \underline{89.20} & \underline{88.65} & 87.18 & 89.65 & \underline{88.93} & \underline{89.39} \\
            \bottomrule
        \end{tabular}
    }
    \vspace{2mm}
    \label{tab:main_mm}
    \vspace{-2em}
\end{table*}

\textbf{Influence of $\beta$ and $n$.} 
There are only two hyper-parameters in ITPruner. 
One is the importance factor $\beta$, which is proposed to control relative compression rates between different layers. 
The other one is sample image $n$, which determines the computation cost of ITPruner. 
Fig.~\ref{fig:beta} reports the accuracy and variance of layer-wise importance in different $\beta$. 
As we can see, layer-wise importance $\boldsymbol{i}$ tends to {have} a significant variance with a large $\beta$, and vice versa. 
Such variance finally determines the difference of layer-wise compression rate through Eq.~\ref{eq:HIS_prob}. 
However, the performance gap between different $\beta$ is negligible. 
Similar results are observed for different $n$. 
In particular, 
we further conduct {an} experiment to demonstrate the influence of the sample size $n$, which is reported in Fig.~\ref{fig:n}. 
As we can see, the variance of the accuracy with different $n$ is {within} $0.1\%$, which means accuracy will not fluctuate greatly with the change of $n$.
{Meanwhile, when $n>512$, we obtain exactly the same network architecture. 
In other words, although the time costs to obtain $\boldsymbol{H}$ exponentially increase with $n$, we set $n=1, 024$ to obtain exactly the same architecture. 
Overall, We conclude that ITPruner is robust to the hyper-parameters.}

\begin{figure} [htb]
\centering
\includegraphics[width=0.85\linewidth]{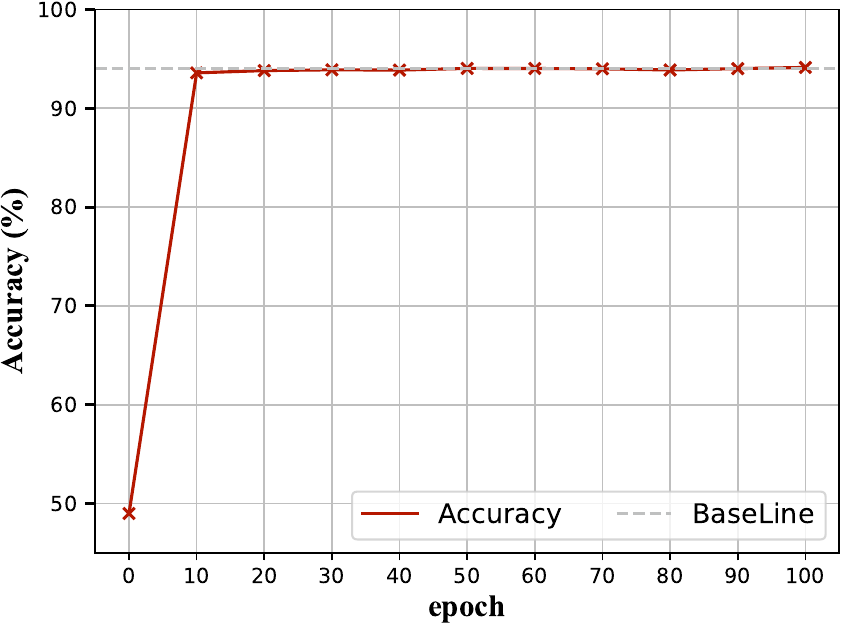}
\caption{Accuracy of ITPruner using the pre-trained model with different training epochs. }
\label{fig:fewer_epoch}
\end{figure}

\textcolor{black}{
\textbf{Different training epochs in pre-training.} Compared to the Growing \cite{yuan2020growing} based method, ITPruner need an extra pre-training stage. 
Through extensive experiments, we find that the converged pre-trained model is actually unnecessary in ITPruner. 
Specifically, we evaluate ITPruner on pre-trained models with different training epochs. As reported in Fig.~\ref{fig:fewer_epoch}, models trained using $>10\%$ epochs show a negligible performance gap. 
In other words, when the training epoch is set to be $10$ in pre-training stage, ITPruner shows a better performance ($93.43$ \text{vs} $92.5$) with a similar cost saving ($5.23\times$ \text{vs} $4.95\times$) in VGG compression. 
Therefore, it is clear that ITPruner does achieve lower training cost saving with a clear performance gap. The detailed answers to the comments are listed as follows accordingly. 
}

\begin{table}[tb]
\begin{center}
\caption{\label{tab:semantic_segmentation} mIOU, FLOPs and model size of different compression methods on the PASCAL VOC semantic segmentation dataset. All the baseline models are trained using our implementation with the same training conditions for a fair comparision.}
\setlength{\tabcolsep}{1mm}{
\begin{tabular}{ccccc}
\toprule
\multirow{2}{*}{\textbf{Model}}                      & \multirow{2}{*}{\textbf{Method}}          & \textbf{FLOPs} & \textbf{Size} & \multirow{2}{*}{\textbf{mIOU}} \\ 
                      &           &  \textbf{(G)} & \textbf{(Mb)} & \\ 
\midrule
\multirow{6}{*}{ResNet-50 \cite{he2016deep}} & Baseline            &   4.65        &     140.03      & 55.0    \\
                           & Autoslim \cite{yu2019autoslim}        & 2.62      & 141.06     & 51.7    \\
                           & TAS \cite{dong2019network}            & 2.41      & 106.07     & 51.0    \\
                           & FPGM \cite{he2019filter}           & 2.41      & 90.85     & 48.8    \\
                           & MetaPruning \cite{liu2019metapruning}    & 2.57      & 122.48     & 53.5    \\
                           \cmidrule{2-5} 
                           & \textbf{ITPruner}       & \textbf{2.46}      & \textbf{95.44}     & \textbf{54.5}    \\ 
                            \bottomrule
\end{tabular}}
\end{center}

\end{table}

\subsection{Transferability}\label{subsec:trans}

{
To further test the transferability of ITPruner on different tasks in the wild, we conduct extensive experiments on detection and semantic segmentation. Specifically, we compress the backbones on the corresponding tasks with different compression methods to validate the generalizability of ITPruner. On both of these tasks, we obtain superior performance compared to other baseline methods, which reveals that our method can be well transferred to other tasks. }

{
\textbf{Compression on Object Detection}
We first apply ITPruner to object detection, which is a fundamental task and deals with detecting objects of a certain class. Specifically, we directly employ ITPruner to compress the backbone in Faster-RCNN~\cite{ren2016faster}, which is a widely-used lightweight object detection framework. 
All the experiments are evaluated on the PASCAL VOC dataset~\cite{everingham2010pascal}, VOC2012 \textit{trainval} and VOC2007 \textit{trainval} are used for training, and VOC2007 \textit{test} are used for testing. 
Tab.~\ref{tab:object_detection} reports the corresponding results on VOC2007 \textit{test}. Obviously, ITPruner clearly achieves a superior trade-off: The model found by ITPruner outperforms baseline with a $3.2\times$ and $5.69\times$ compression in terms of FLOPs and model size, respectively; ITPruner outperforms Autoslim, ABCPruner and TAS by $4.99$, $1.61$ and $2.03$ respectively with much less model size. }

{
\textbf{Compression on Semantic Segmentation}
We further validate the effectiveness of ITPruner on semantic segmentation tasks. In particular, semantic segmentation aims to provide a dense labeling map of object categories for a given image. Similar to object detection, the networks used in semantic segmentation are divided into backbone and head. We directly use ITPruner to compress the backbone in DeepLabV3~\cite{chen2017rethinking}. All the experiments are evaluated on the PASCAL VOC2012 \cite{everingham2010pascal}, which contains high-resolution with pixel-wise annotations. The dataset includes $2, 913$ finely annotated images collected 
with 20 object categories, and is split with $1,464$ for training, $1,449$ for validation. 
Tab.~\ref{tab:semantic_segmentation} shows the results on VOC2012.  Specifically, the model compressed by ITPruner significantly outperforms automatic compression methods Autoslim, TAS, and MetaPruning by $2.8$, $3.5$ and $1.0$, respectively. At the same time, the FLOPs is reduced to $2.46$G, which is similar or much less to the other baselines. We thus conclude that ITPruner achieves a superior trade-off between mIOU and model complexity. Compared with the local metric based method FPGM, our model outperforms it with $5.7$. Compared with the redundant original model Resnet-50, the performance of our compact model drops slightly by $0.5$, but the latter of which has a much lower parameter size and FLOPs. 
}



\section{Conclusion}

In this paper, we present an information theory-inspired strategy for {automated} network pruning (ITPruner), which determines the importance of layer by observing the independence of feature maps and \textcolor{black}{does} not need any search process. 
To that effect, ITPruner is derived from the information bottleneck principle, which proposed to use normalized HSIC to measure and determine the layer-wise compression ratio. 
We also mathematically prove the robustness of the proposed method as well as the relation to mutual information. 
Extensive experiments on various modern CNNs demonstrate the effectiveness of ITPruner in reducing the computational complexity and model size. 
In future work, we will do more on the theoretical analysis on neural architecture search to effectively find optimal architecture without any search process.


\begin{appendices}



\section{Proof of Theorem 1}
\begin{theorem} \label{theorem:compression_bottle}
    Assuming $X$ is the input random variable follows a Markov random field structure and the Markov random field is ergodic. 
    For a network that have $L$ hidden representations $X_1, X_2, ..., X_L$, with a probability $1-\delta$, the generalization error $\epsilon$ is bounded by
    \begin{equation}\nonumber
        \epsilon \leq \sum_{i=1}^L\sqrt{\frac{\log \frac{2}{\delta} + \log2\text{I}(X ;X_i)}{2n}},
    \end{equation}
    where $n$ is the number of training examples. 
\end{theorem}

\begin{proof}
Let $\mathcal{D}_n = {(x_1, y_1),...,(x_n, y_n)}$ be an i.i.d  random sample from $P_{X,Y}$. We define the loss function as a mapping $l:\mathcal{Y}\times\mathcal{Y}\rightarrow\mathbb{R}^+$ and the performance of a predictor $h:\mathcal{X}\rightarrow\mathcal{Y}$ is measured by the expectation on $P_{X,Y}$, which is also known as generalization error and formally defined as $R(h):=\mathbb{E}_{X,Y}\left[l(Y, h(X))\right].$ In practice, $P_{X,Y}$ is usually unknown, we thus estimate $\hat{h}$ based on $\mathcal{D}_n$ and the corresponding empirical risk is defined as $\hat{R}_n(h):=\frac{1}{n}\sum_{i=1}^nl(y_i, h(x_i)).$ For simplicity, we further define the mapping function $l = I\left(h(X)\neq Y\right)$, where $I(.,.)$ is the indicator function. In this case, applying the Hoeffding inequality \cite{hoeffding1994probability} we have 
\begin{equation}
    P\left(\left|\hat{R}_n(h)-R(h)\right|\geq \epsilon \right)\leq 2\exp\left(-2n\epsilon^2\right).
\end{equation}
Together with PAC learning theory \cite{mohri2018foundations}, we obtain the following corollary:
\begin{corollary}\label{coro:pac_hoeffding}
Let $\mathcal{H}$ be a finite set of hyothesis. Then for all $\epsilon>0$, we have 
\begin{equation}
    P\left(\exists h\in \mathcal{H} \middle| \left|\hat{R}_n(h)-R(h)\right|\geq \epsilon \right)\leq 2\left| \mathcal{H} \right|\exp\left(-2n\epsilon^2\right).
\end{equation}
\end{corollary}
We further control the probability in Corollary \ref{coro:pac_hoeffding} with a confidence $\delta$. That is, we ask $$2\left| \mathcal{H} \right|\exp\left(-2n\epsilon^2\right)\leq\delta.$$
In other words, with a probability $\delta$, we have
\begin{align*}
    2\left| \mathcal{H} \right|\exp\left(-2n\epsilon^2\right)&\leq\delta\\
    \exp\left(-2n\epsilon^2\right)&\leq\frac{\delta}{2\left| \mathcal{H} \right|}\\
    -2n\epsilon^2&\leq \log\frac{\delta}{2\left| \mathcal{H} \right|}\\
    \epsilon^2&\geq \frac{\log\frac{2}{\delta}+ \log\left| \mathcal{H} \right|}{2n}.
\end{align*}
Meanwhile, we can also conclude that $\epsilon^2$ is also bounded
\begin{equation}
    \epsilon^2\leq \frac{\log\frac{2}{\delta}+ \log\left| \mathcal{H} \right|}{2n},
\end{equation}
with probability $1-\delta$. 
To prove our theorem, we further introduce a lemma \cite{shwartz2018representation} about Asymptotic Equipartition Property (AEP) \cite{cover1999elements} as follows:
\setcounter{theorem}{0}
\begin{lemma}
Assuming $X$ is a random variable follows a Markov random field structure and the Markov random field is ergodic. Then, we have that 
\begin{equation}
    \left| \mathcal{H} \right| \leq 2^{H(X)},
\end{equation}
Let $X_i$ be a mapping of $X$, the size of typical set $|\mathcal{T}|$ is bounded by 
\begin{equation}
    |\mathcal{T}| \leq \frac{2^{H(X)}}{2^{H(X|X_i)}} = 2^{\text{I}(X;X_i)}
\end{equation}
\end{lemma}
Therefore, we conclude that for a mapping $T$, the square error is bounded by
\begin{equation}
    \epsilon^2_T\leq \frac{\log\frac{2}{\delta}+ \log|\mathcal{T}|}{2n} \leq \frac{\log\frac{2}{\delta}+ \text{I}(X;X_i)\log2}{2n}.
\end{equation}
Considering that there are $L$ mappings in the network, we thus conclude that the overall square error is bounded by 
\begin{equation}
    \epsilon \leq \sum_{i=1}^L\sqrt{\frac{\log \frac{2}{\delta} + \text{I}(X;X_i)\log2}{2n}},
\end{equation}
 with a probability $1-\delta$.
\end{proof}

\color{black}

\begin{figure} [tb]
\centering
\includegraphics[width=1.0\linewidth]{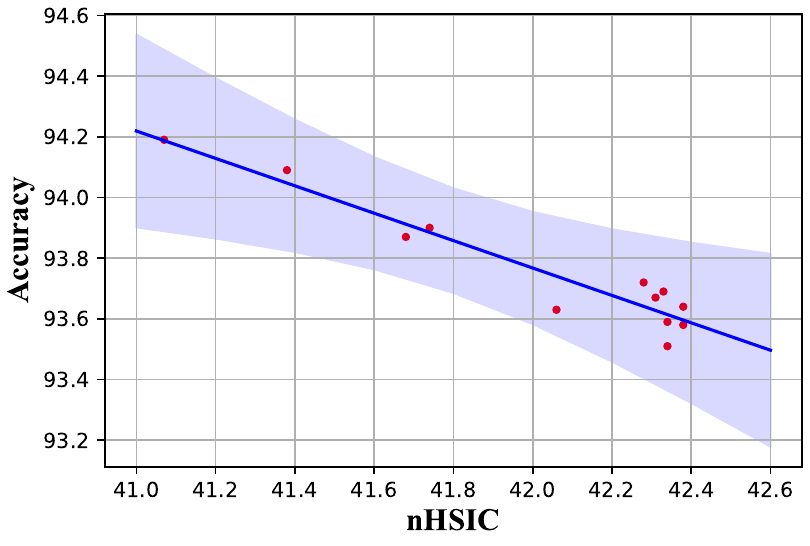}
\caption{The relationship between mutual information (MI) and accuracy on the validation set. The x-axis measures the nHSIC of the sampled feature, and the y-axis measures the accuracies of the randomly sampled architectures. The correlation between MI and accuracy is $-0.955872$. In other words, nHSIC is negatively correlated to the finetuned accuracy and thus can be used as an accurate predictor in model compression.}
\label{fig:nhsic_acc}
\end{figure}

\section{The Debate on the Information Bottleneck}
ITPruner is based on the theory of the information bottleneck in deep learning \cite{shwartz2017opening}, which is challenged by \textit{Saxe et al.} \cite{saxe2019information}. Specifically, \textit{Saxe et al.} \cite{saxe2019information} argue that the representation compression phase empirically demonstrated in \textit{Shwartz-Ziv et al.} \cite{shwartz2017opening} only appears when double-sided saturating nonlinearities like tanh and sigmoid are deployed. In this case, the claimed causality between compression and generalization was also questioned, which is foundation of ITPruner. 

To clear such debate, we first propose Theorem 1 to mathematically demonstrate that reducing the mutual information between input and the hidden representations indeed leads to a smaller generalization error. Secondly, we also provide empirical evidence in Fig.~\ref{fig:nhsic_acc} to show the performance on the validation set is highly correlated with the layer-wise mutual information. 
At last, the critical argument raised in \textit{Saxe et al.} \cite{saxe2019information} may cause by the inaccurate estimation of mutual information. Recent work \cite{wang2021pac} propose a new method to estimate the mutual information and empirically identify the original experiments in \textit{Shwartz-Ziv et al.} \cite{shwartz2017opening}. For example, Fig.~\ref{fig:mi_epoch_VGG} reports the change of mutual information (MI) between input and different layers in the training process. As we can see, there is a clear boundary between the initial ﬁtting (increase of mutual information) and the compression phases (decrease of mutual information). Overall, we thus conclude that the generalized information bottleneck principle behind ITPruner is solid theoretically and empirically. 

\begin{figure} [tb]
\centering
\includegraphics[width=1.0\linewidth]{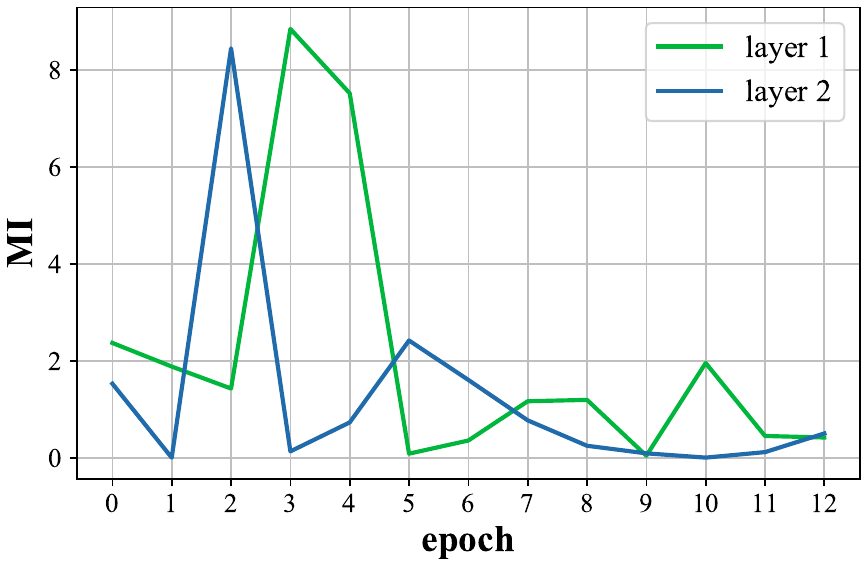}
\caption{The change of mutual information (MI) between input and different layers in training process. There is a clear boundary between the initial ﬁtting (increase of mutual information) and the compression phases (decrease of mutual information) in different layers. }
\label{fig:mi_epoch_VGG}
\end{figure}

\begin{table}[htb]
\centering
\begin{tabular}{ccc}
\midrule
\textbf{Model} & \textbf{Method} & \textbf{Acc$\%$} \\
\midrule
\multirow{4}{*}{ResNet-$20$}    & baseline & $89.65$ \\
& +Weight Decay (WD) & $92.13$ \\
& +Mutual Information (MI) & $92.01$ \\
& +MI+WD & $92.15$ \\
\midrule
\multirow{4}{*}{Invertable-ResNet} & baseline & $92.28$ \\
& +Weight Decay (WD) & $94.38$ \\
& +Mutual Information (MI) & $94.27$ \\
& +MI+WD & $94.43$ \\
\midrule
\end{tabular}
\caption{The effect of different regularization terms on ResNet-20 and Invertable-ResNet. ``baseline'' indicates that the model is trained without weight-decay.}
\label{tab:iresnet_wd_R3Q5}
\end{table}

\begin{table*}[tb]
\renewcommand\arraystretch{1.3}
\begin{center}
\begin{tabular}{cccc}
\toprule
\multicolumn{2}{c}{{\textbf{Method}}} & {\textbf{Metric}} & \textbf{Top-1 Acc}($\%$) \\
 \midrule
\multirow{5}{*}{ITPruner} & +Snip~\cite{lee2018snip}  & $\left| \frac{\partial L}{\partial W}\odot W \right|$ & $93.93$ \\
 & +Fisher~\cite{turner2019blockswap} & $S_X(X)=\left(\frac{\partial L}{\partial X} X\right)^2, S_n=\sum_{i=1}^M S_X\left(X_i\right)$ & $93.86$\\
 & +Synflow~\cite{tanaka2020pruning} & $\frac{\partial L}{\partial W} \odot W$ & $93.71$\\
  & +Grad\_Norm~\cite{abdelfattah2021zero} & $\left\|\frac{\partial L}{\partial W}\right\|_2$ & $93.94$\\
  & \textbf{+L1}~\cite{li2016pruning} & $\left\| W \right\|_1$ & $94.00$\\ 
 \midrule
\multirow{5}{*}{Uniform} & +Snip~\cite{lee2018snip} & $\left| \frac{\partial L}{\partial W}\odot W \right|$ & $92.67$\\
 & +Fisher~\cite{turner2019blockswap} & $S_X(X)=\left(\frac{\partial L}{\partial X} X\right)^2, S_n=\sum_{i=1}^M S_X\left(X_i\right)$ & $92.79$\\
 & +Synflow~\cite{tanaka2020pruning} & $\frac{\partial L}{\partial W} \odot W$ & $92.57$\\
  & +Grad\_Norm~\cite{abdelfattah2021zero} & $\left\|\frac{\partial L}{\partial W}\right\|_2$ & $92.81$\\
  & \textbf{+L1}~\cite{li2016pruning} & $\left\| W \right\|_1$ & $92.92$\\
 \bottomrule
\end{tabular}
\end{center}
\caption{Top-$1$ accuracy of compressed VGG using different local metrics and architectures on CIFAR-10.}
\label{tab:combine_other_prune_method}
\end{table*}

\section{The Debate on the Invertable-ResNet}
\textcolor{black}{In the work of \emph{Gomez et al.} \cite{gomez2017reversible} and \emph{Jacobsen et al.} \cite{jacobsen2018revnet}, they propose reversible networks, which is capable of constructing hidden representation that encode the input. Here we argue that Invertable-ResNet \cite{gomez2017reversible} is not an experimental counterexample for our Theorem \ref{theorem:compression_bottle}. }

\textcolor{black}{
Specifically, Invertable-ResNet \cite{gomez2017reversible} actually need to optimize layer-wise mutual information mentioned in Theorem \ref{theorem:Mi_and_wd} for better generalization. Firstly, in their released source code\footnote{\url{https://github.com/renmengye/revnet-public/blob/master/resnet/configs/cifar_configs.py\#L28}}, weight decay is necessary for a better generalization. We also conduct an ablation study to validate the effectiveness of weight decay. As shown in Tab.~\ref{tab:iresnet_wd_R3Q5}, both ResNet-20 and Invertable-ResNet show a significant performance promotion when the regularization term is adopted in the training. Meanwhile, as theoretically demonstrated in Theorem \ref{theorem:Mi_and_wd}, the adopted weight decay regularization is highly correlated to mutual information. Moreover, directly replacing wight decay with mutual information still shows a performance promotion, which is reported in Tab.~\ref{tab:iresnet_wd_R3Q5}. We thus conclude that optimizing mutual information in Invertable-ResNet \cite{gomez2017reversible} is necessary. }

\textcolor{black}{
There are two types of hidden representations exist in Invertable-ResNet \cite{gomez2017reversible} simultaneously. One is highly related to the input $X$ and uncorrelated to the output $Y$. The other one is the opposite, which is related to $Y$ and uncorrelated to $X$. To validate our argument, we conduct a new experiment that extracts features in Invertable-ResNet and reports the mutual information with input and output. As illustrated in Fig.~\ref{fig:irevnet_MI_relation}, most of the features have a certain degree of correlation with the $X$ and $Y$. However, the two types of the feature described before do exist in Invertable-ResNet, which is marked as a darker red dot in Fig.~\ref{fig:irevnet_MI_relation}.}

\begin{figure} [htb]
\centering
\includegraphics[width=1.0\linewidth]{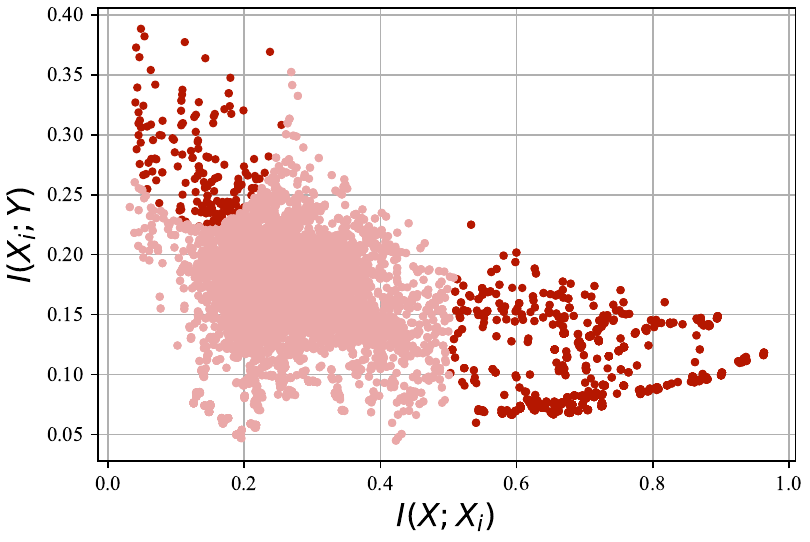}
\caption{The mutual information between the intermediate feature map and the input $X$, output $Y$, respectively. Each point represents a different channel in the feature map.}
\label{fig:irevnet_MI_relation}
\end{figure}

\setcounter{theorem}{4}

\begin{theorem}\label{theorem:Mi_and_wd}
\textcolor{black}{
Assuming that $X \sim \mathcal{N} (\boldsymbol{0},  \boldsymbol{\Sigma}_{X})$ and $Y \sim \mathcal{N} (\boldsymbol{0},  \boldsymbol{\Sigma}_{Y}), Y=W^TX$, $\min I(X;Y) \propto \min ||W||_F^2$.}
\end{theorem}
\begin{proof}
\textcolor{black}{
The definition of mutual information is $I(X;Y)=H(X)+H(Y)-H(X,Y)$. Meanwhile, the entropy of multivariate Gaussian distribution $H(X)=\frac{1}{2}\text{ln}({(2\pi e)}^D|\Sigma_X|)$ and the joint distribution $(X,Y) \sim N\left(0,\Sigma_{(X,Y)}\right)$, where 
\begin{equation}
\Sigma_{(X,Y)}=\left( \begin{array}{cc}   
    \Sigma_{X} & \Sigma_{XY}\\
    \Sigma_{YX} & \Sigma_{Y}\\
  \end{array}
\right).
\end{equation}
Therefore, the mutual information between Gaussian distributed random variables $X$ and $Y$ is represented as,
\begin{equation}
\label{MI_Gauss_Dis}
I(X;Y)=\text{ln}|\Sigma_X|+\text{ln}|\Sigma_Y|-\text{ln}|\Sigma_{(X,Y)}|.
\end{equation}
According to Everitt inequality, $|\Sigma_{(X,Y)}| \leq |\Sigma_X||\Sigma_Y|$, this inequality holds if and only if $\Sigma_{YX}=\Sigma_{XY^T}=X^TY$ is a zero matrix. 
In other words, 
\begin{equation}
\label{correlated_minimize}
\begin{aligned}
&\min I(X;Y) \\
 \propto& \min ~||\boldsymbol{X}^T\boldsymbol{Y}||_F^2 \\
\propto &\min ~||\boldsymbol{X}^T\boldsymbol{W}^T\boldsymbol{X}||_F^2.
\end{aligned}
\end{equation}
For simplicity, we take an orthogonalized $\boldsymbol{X}$. According to the unitary invariance of Frobenius norm, we have, 
\begin{equation}
\begin{aligned}
\label{correlated_proof}
 &\min~||\boldsymbol{X}^T\boldsymbol{W}^T\boldsymbol{X}||_F^2 \\
=&\min~||\boldsymbol{W}||_F^2.
\end{aligned}
\end{equation}}

\end{proof}

\section{Ablation Study}
We conduct ablation study to evaluate our calim that selecting weights in a layer need to be pruned does not help as much as finding a better $\alpha$. As we can see in Tab.~\ref{tab:combine_other_prune_method}, ITPruner always shows a significant improvement compared to their original methods. Meanwhile, integrating different pruning methods show a very similar performance. We thus conclude that selecting which weights in a layer need to be pruned does not help as much as finding a better $\alpha$, which is also well verified in the previous survey \cite{blalock2020state}.

\section{Data Availability Statement}
The data used in this study are sourced from three publicly available datasets: CIFAR-10~\cite{lecun1998gradient}, ImageNet~\cite{russakovsky2015imagenet}, and PASCAL VOC2012~\cite{everingham2010pascal}. These datasets are widely used in the research community for various machine learning and computer vision tasks, including image classification and object detection.

All datasets are available from their respective sources:

- CIFAR-10 can be accessed at \url{https://www.cs.toronto.edu/~kriz/cifar.html}.
- ImageNet can be accessed at \url{http://www.image-net.org/}.
- PASCAL VOC2012 can be accessed at \url{http://host.robots.ox.ac.uk/pascal/VOC/voc2012/}.

These datasets support the findings of this study and are available for public use. Researchers interested in using these datasets can access them through the provided URLs. There are no restrictions on the availability of these data, allowing the scientific community to freely build upon the findings of this study and advance the state-of-the-art in machine learning and computer vision.

\end{appendices}


\bibliographystyle{sn-mathphys}
\bibliography{sn-article}

\end{document}